\newtheorem{prop}{Proposition}
\newtheorem{defn}{Definition}
\setlist{nosep}
\DeclarePairedDelimiter\floor{\lfloor}{\rfloor}
\DeclareMathOperator*{\argmin}{arg\,min}
\NewDocumentCommand{\stack}{O{c}m}
 {
  \begin{tabular}[#1]{@{}c@{}}
  \tl_map_function:nN { #2 } \__tom_stack:n
  \end{tabular}
 }
\def\SOUL@soeverytoken{%
 {\the\SOUL@token}\par\noindent}
\newcommand{\cmark}{\ding{51}}%
\newcommand{\xmark}{\ding{55}}%
\icmltitlerunning{Spatial Transformer $\boldsymbol{K}$-Means}
\begin{document}

\twocolumn[
\icmltitle{Spatial Transformer $\boldsymbol{K}$-Means}

% It is OKAY to include author information, even for blind
% submissions: the style file will automatically remove it for you
% unless you've provided the [accepted] option to the icml2021
% package.

% List of affiliations: The first argument should be a (short)
% identifier you will use later to specify author affiliations
% Academic affiliations should list Department, University, City, Region, Country
% Industry affiliations should list Company, City, Region, Country

% You can specify symbols, otherwise they are numbered in order.
% Ideally, you should not use this facility. Affiliations will be numbered
% in order of appearance and this is the preferred way.
\icmlsetsymbol{equal}{*}

\begin{icmlauthorlist}
\icmlauthor{Romain Cosentino}{to}
\icmlauthor{Randall Balestriero}{to}
\icmlauthor{Yanis Bahroun}{goo}
\icmlauthor{Anirvan Sengupta}{goo,ed}
\icmlauthor{Richard Baraniuk}{to}
\icmlauthor{Behnaam Aazhang}{to}
\end{icmlauthorlist}

\icmlaffiliation{to}{Rice University}
\icmlaffiliation{goo}{Flatiron Institute}
\icmlaffiliation{ed}{Rutgers University}

\icmlcorrespondingauthor{Romain Cosentino}{rom.cosentino@gmail.com}

% You may provide any keywords that you
% find helpful for describing your paper; these are used to populate
% the "keywords" metadata in the PDF but will not be shown in the document
\icmlkeywords{Machine Learning, ICML}

\vskip 0.3in
]

% this must go after the closing bracket ] following \twocolumn[ ...

% This command actually creates the footnote in the first column
% listing the affiliations and the copyright notice.
% The command takes one argument, which is text to display at the start of the footnote.
% The \icmlEqualContribution command is standard text for equal contribution.
% Remove it (just {}) if you do not need this facility.

%\printAffiliationsAndNotice{}  % leave blank if no need to mention equal contribution
\printAffiliationsAndNotice % otherwise use the standard text.

%%%%%%%%% ABSTRACT
\begin{abstract}
$K$-means defines one of the most employed centroid-based clustering algorithms with performances tied to the data's embedding. Intricate data embeddings have been designed to push $K$-means performances at the cost of reduced theoretical guarantees and interpretability of the results. Instead, we propose preserving the intrinsic data space and augment $K$-means with a similarity measure invariant to non-rigid transformations. This enables (i) the reduction of intrinsic nuisances associated with the data, reducing the complexity of the clustering task and increasing performances and producing state-of-the-art results, (ii) clustering in the input space of the data, leading to a fully interpretable clustering algorithm, and (iii) the benefit of convergence guarantees.
\end{abstract}
%%%%%%%%% BODY TEXT

\section{Introduction}

Clustering algorithms aim at discovering patterns in the data that enable their characterization, identification, and separation. The development of such a framework without any prior information regarding the data remains one of the milestones of machine learning that would assist clinicians, physicists, and data scientists, among others, with a better pattern discovery tool \cite{bertsimas2020interpretable,greene2005producing}.

While supervised learning has been converging toward the almost exclusive use of Deep Neural Networks (DNN), avoiding the development of handcrafted features to provide the desired linearly separable embedding map, unsupervised clustering algorithms take various forms depending on the application at hand \cite{ma2019learning,wagstaff2001constrained,estivill2002so}. For instance, the usage of SIFT features combined with clustering algorithm for medical imaging \cite{nam2009high}, the extraction of DNNs embedding used as the input of the $K$-means algorithm for computer vision tasks \cite{xie2016unsupervised}, and the combination of signal-processing features extractors combined with Gaussian mixture model to understand the nature of the various seismic activities \cite{seydoux2020clustering}. The important role of clustering algorithms in assisting medical diagnoses as well as scientific discoveries highlight the importance of the development of an \textit{interpretable} and \textit{theoretically guaranteed} tool \cite{dolnicar2003using,xu2010clustering}.

In this work, we focus our attention on the $K$-means clustering algorithm \cite{macqueen1967some} and its application to $2$-dimensional signals, such as images or time-frequency representations. Well-known for its simplicity, efficiency, and interpretability, the $K$-means algorithm partitions the data space into $K$ disjoint regions. Each region is represented by a centroid, and each datum is assigned to the closest centroid's region. 
The integral part in the design of a clustering algorithm is the choice of an appropriate distance, and the number of clusters \cite{he2013k,frey2002fast,990920}. While the Euclidean distance makes the design of the algorithm straightforward, this measure of similarity might omit the geometrical relationships between data points \cite{steinbach2004challenges}. In fact, a small rigid perturbation of an image, such as rotation or translation, is enough to change the cluster assignment.  

There are two major difficulties in constructing a distance for a clustering algorithm; on the one hand, the metric should take into account the geometry of the data, e.g., be invariant to rigid transformations for images, and on the other hand, the metric should be interpretable as it is tied to the interpretability of the algorithm \cite{steinbach2004challenges}.

%A solution to eliminate the excess of degree of freedom in a problem that are considered as inherent nuisances has been the key in pushing forward the understanding of surrounding physical processes \cite{goldstone1962broken,corry1997hermann,schwichtenberg2015physics}.

In this work, we tackle these two difficulties by introducing in our similarity measure the spatial transformations inherent to the geometry of the data at hand.  
In particular, we: $(i)$ formulate an interpretable and theoretically guaranteed $K$-means framework capable of exploiting the symmetry within the data, $(ii)$ extend prior work on metrics invariant to rigid transformations to non-rigid transformations, thus taking into account a more realistic set of nuisances and $(iii)$ allow the learnability of the symmetry underlying the data at hand, therefore enabling the exploration of data where the equivalence classes are yet to be determined. 

To learn the symmetry in the data and perform their transformations, we will use the spatial transformer framework, which was successfully introduced in \citet{jaderberg2015spatial}. This allows us to provide a learnable metric invariant to non-rigid transformations that is used as the $K$-means distortion error.

While many approaches to learn and estimate non-rigid transformations have been proposed, we will follow one of nowadays mainstream approaches developed in \citet{jaderberg2015spatial} where the Thin Plate Spline is used as a differentiable deformation model. Our attempt is, in fact, not to compare among deformation models but to consider a way to approach the learnability of invariances in an unsupervised setting such that it is effective, tractable, and interpretable.

Our contributions can be summarized as follows:
\begin{itemize}[leftmargin=*, itemsep=0pt]
    \item We propose a novel approach to tackle clustering using a novel adaptive similarity measure within the $K$-means framework that considers non-rigid transformations, Sec.~\ref{sec:similarity}.
    \item We derive an appropriate update rule for the centroids that drastically improves both the interpretability of the centroids and their quality, Sec.~\ref{sec:learning}.
    \item We provide convergence guarantees and geometrical interpretations of our approach, Sec.~\ref{sec:convergence},~\ref{sec:geometry}.
    \item Finally, we show numerically that our unsupervised algorithm competes with state-of-the-art methods on various datasets while benefiting from interpretable results, Sec.~\ref{sec:res}.
\end{itemize}

\section{Background}
\label{sec:background}

\subsection{Invariant Metrics}

The development of measures invariant to specific deformations has been under investigation in the computer vision community for decades \citep{fitzgibbon2002affine,simard2012transformation,lim2004image}. By considering affine transformations such as shearing, translation, and rotation of the data as being nuisances, these approaches propose a distance that reduces the variability intrinsic to high-dimensional images. These works are considered as appearance manifold-based framework; that is, the distance are quantified by taking into account geometric proximity \cite{262951,basri1998clustering,su2001modified,1211332}. 

While the development of affine invariant metrics is pretty standard, their extension to more general non-rigid transformations requires more attention. Recently, various deep learning methods proposed ways to learn diffeomorphic transformations \cite{detlefsen2018deep,balakrishnan2018unsupervised,lohit2019temporal,dalca2019learning,NEURIPS2019_db98dc0d}. Others adopt a more theoretically grounded approach based on group theory as in \citet{zhang2015finite,freifeld2015highly,durrleman2013sparse,allassonniere2015bayesian} as well as the statistical ``pattern theory'' approach developed in \citet{grenander1993general,dupuis1998variational}.

\subsection{Spatial Transformer}

\begin{figure}[t]
   \centering
   \begin{minipage}{.32\linewidth}
   \includegraphics[width=\textwidth]{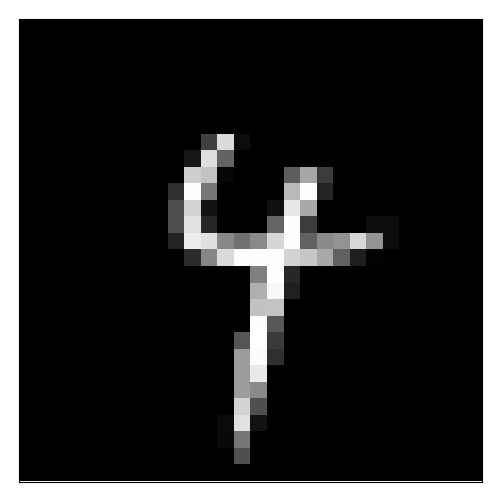}
   \end{minipage}
     \begin{minipage}{.32\linewidth}
   \includegraphics[width=\textwidth]{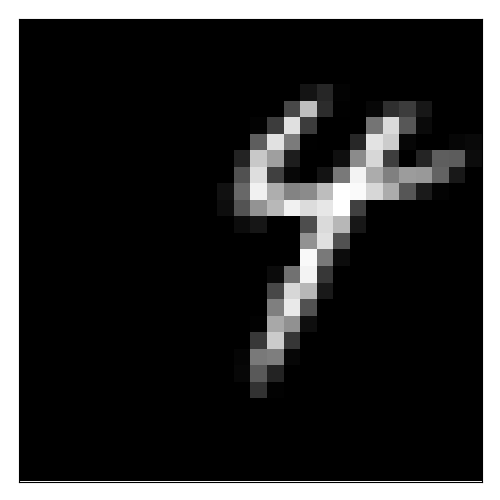}
   \end{minipage}
         \begin{minipage}{.32\linewidth}
   \includegraphics[width=\textwidth]{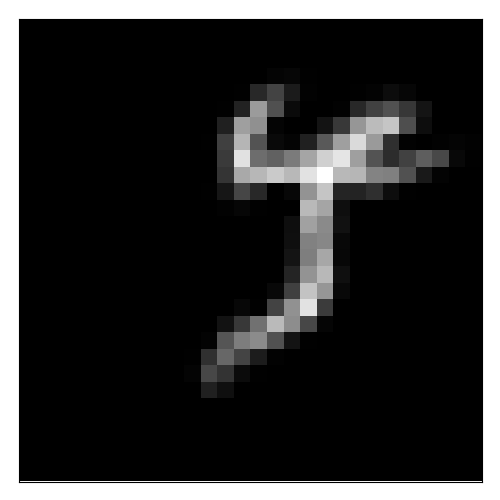}
   \end{minipage}
         \begin{minipage}{.32\linewidth}
   \includegraphics[width=\textwidth]{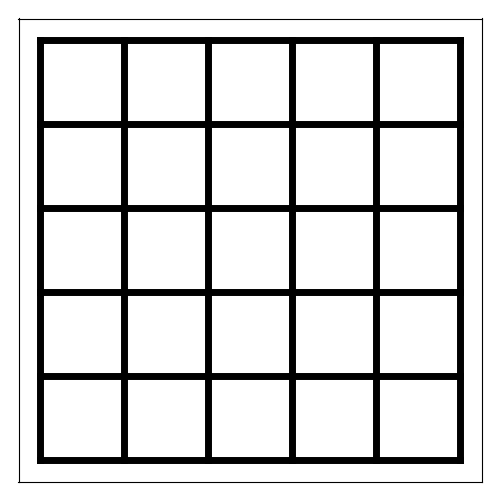}
   \end{minipage}
      \begin{minipage}{.32\linewidth}
   \includegraphics[width=\textwidth]{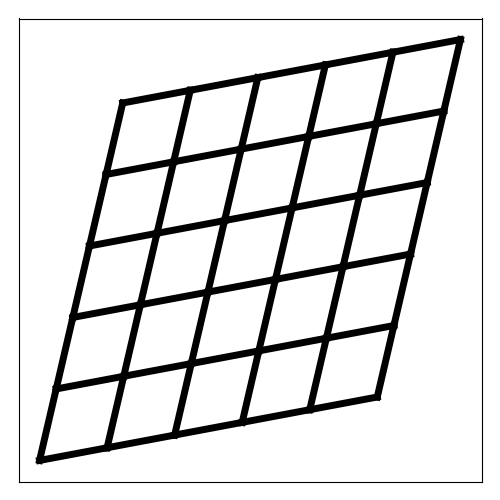}
   \end{minipage}       
      \begin{minipage}{.32\linewidth}
   \includegraphics[width=\textwidth]{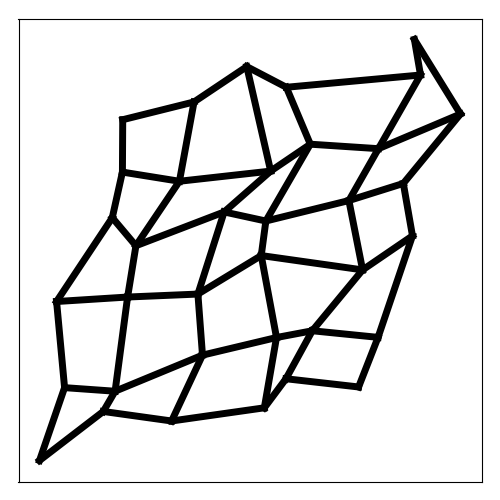}
   \end{minipage}   
   \caption{\textbf{Spatial Transformations - } Visualizations of a sample taken from the MNIST dataset and its transformed versions. Each image results from the application of the spatial transformer that take as input the original signal (top left), and the grid displayed below its transformed version. (\textit{Left}) we observe the original image and its associated original transformation grid, which corresponds to the identity transform. (\textit{Middle)} the image has been transformed by the affine transformation induced by the associated grid. (\textit{Right}) the image transformed by the non-rigid transformation using the TPS induced by the grid below it.} %the affine transformations and the
   \label{fig:orig_to_diffeo}
 \end{figure}
The transformer operator, denoted by $\mathcal{T}$, allows for non-rigid image transformations. It is based on the composition of two mappings; a deformation map and a sampling function. The deformation maps a uniform grid of $2$-dimensionak coordinates to provide its transformed version. The sampling function samples the signal with respect to a given grid of $2$-dimensional coordinates.

The mapping we select to enable the learnability of the transformation in the coordinate space of the $2$-dimensional signal is the Thin-Plate-Spline (TPS) interpolation technique \cite{duchon1976interpolation,bookstein1989principal,nejati2010fast} which produces smooth surfaces from $\mathbb{R}^2$ to $\mathbb{R}^2$ \cite{morse2005interpolating}. We refer the reader to Appendix~\ref{app:TPS} for details regarding this method.
We consider as learnable parameters of the TPS a set of $2$-dimensional coordinates, called landmarks, and denoted by $\nu$. Given a set of landmarks, the TPS provides the transformation map of a $2$-dimensional grid. That is, the euclidean plane is bent according to the learned landmarks.

In Fig.~\ref{fig:orig_to_diffeo}, we show on the bottom right the grid associated with the $\ell=6^2$ landmarks. Each grid corresponds to the spatial transformation applied to the hand-written digit $4$. The transformation of the signal based on these new coordinates is produced by performing bilinear interpolation using the original signal (top left) and the new coordinates; the details are provided in Appendix~\ref{app:TPS}.

The spatial transformer is the composition of these two maps and is defined as
\begin{equation}
    \mathcal{T}(x,\nu),
    \label{eq:tps_notation}
\end{equation}
where $x \in \mathbb{R}^n$ is the original $2$-dimensional signal, $\nu \in \mathbb{R}^{2\ell}$ is the set of $2$-dimensional transformed coordinate to be learned. Note that $2\ell$ can be smaller than the dimension of the image as the TPS interpolates to re-scale the transformation to any size.

Such a framework composing the TPS and bilinear interpolation has been defined as spatial transformer in \citet{jaderberg2015spatial}. However, in their work, the inference of the non-rigid transformations is performed using each datum as the input of a ``localisation network"; instead, we directly learn the transformation parameters. 

\section{Spatial Transformer $K$-means}
\label{sec:RAI-Kmeans}
We now introduce the spatial transformer $K$-means, ST $K$-means, our proposed solution that composes the spatial transformer and the $K$-means algorithm.
% Specifically, we introduce the ST $K$-means algorithm. 
%The ST $K$-means combines the $K$-means' centroids-based clustering approach with a measure of similarity that we design to be invariant to non-rigid deformations. 
%diffeomorphic deformations

\subsection{Formalism}
\label{sec:similarity}
We recall that in this work we will consider $2$-dimensional signals defined by their width and height, such as images and time-frequency representation of time-series. Given a set of $2$-dimensional signals ,$\{x_i \}_{i=1}^{N}$, with $x_i \in \mathbb{R}^{n}$, the $K$-means algorithm aims at grouping the data into $K$ distinct clusters defining the partition $\mathcal{C} = \left \{ C_k \right \}_{k=1}^{K}$, with $\cup_k C_k=\{x_i \}_{i=1}^{N}$ and $C_i\cap C_j = \emptyset, \forall i\neq j$. Each cluster $C_k$ of the partition is represented by a centroid $\mu_k \in \mathbb{R}^n, \forall k \in \{1,\dots, K\}$.

As for the $K$-means algorithm, the goal of the ST $K$-means is to find centroids minimizing the following distortion error
\begin{equation}
\begin{aligned}
& \min_{\mathcal{C}, \mu_{1},\ldots, \mu_K}
& & \sum_{k=1}^{K} \sum_{i: x_i \in C_k}   d(x_i, \mu_k)~~.
\label{eq:RAI_Kmeans}
\end{aligned}
\end{equation}
The assignment of a signal $x_i$ to a cluster $C_k$ is achieved through the evaluation of the similarity measure, $d$, between the signal and each centroid. A signal $x_i$ belongs to the cluster $C_l$ if and only if $l = \argmin_k d(x_i, \mu_k)$. While the standard $K$-means algorithm makes use of the Euclidean distance, i.e., $d(x_i, \mu_k)= \|x_i - \mu_k \|_2^2$, we instead propose to use the following deformation invariant similarity measure
\begin{align}
    &d(x_i,\mu_k) := \min_{\nu \in \mathbb{R}^{2\ell}} \left \| \mathcal{T}(x_i,\nu) - \mu_k \right \|_2^2~~,  \label{eq:def_invariant}  
\end{align}
which is a \textit{Quasipseudosemimetric}, see Appendix~\ref{app:proofs} for details and proof.

This similarity measure represents the least-square distance between the centroids and the datum that has been fit to the centroid via the spatial transformer operator. Once this fitting is done for each centroid, the cluster assignment is done based on the argmin of those distances, i.e.,  the data $x_i$ is assigned to $\argmin_{k} d(x_i,\mu_k)$. Therefore, the underlying assumption of our approach is that the distance between the optimal transformation of a signal into a centroid belonging to the same "class" should be smaller than the distance between its optimal transformation into a centroid that does not. That is, let $x_i$ be geometrically near $\mu_k$, then $\min_{\nu \in \mathbb{R}^{2\ell}} \left \| \mathcal{T}(x_i,\nu) - \mu_k \right \|_2^2 < \min_{\nu \in \mathbb{R}^{2\ell}} \left \| \mathcal{T}(x_i,\nu) - \mu_k' \right \|_2^2$.

This measure requires solving a non-convex optimization problem. It can be achieved in practice by exploiting the spatial transformer's differentiability with respect to the landmarks $\nu$. As a result, we can learn the transformation by performing gradient-descent based optimization \cite{kingma2014adam}; further details regarding this optimization are given in Appendix~\ref{app:impl_details} as well as solutions to facilitate the optimization of the non-convex objective by exploiting the manifold geometry.

The crucial property of the measure we propose is its invariance to deformations that are spanned by the spatial transformer; formal proofs and definitions are proposed in Appendix~\ref{app:invariance}. 
This means that evaluating Eq.~\ref{eq:def_invariant} with any datum that is transformed from the spatial transformer will produce the same value, as long as no information is lost.

\subsection{Learning the Spatial Transformer $K$-means}
\label{sec:learning}
Solving the optimization problem in Eq.~\ref{eq:RAI_Kmeans}, similarly to $K$-means, is an NP-hard problem. A popular tractable solution nonetheless exists and is known as the two-step Lloyd algorithm \cite{lloyd1982least}. 

In the ST $K$-means, the first step of the Lloyd algorithm consists of assigning the data to the clusters using the newly defined measure of similarity in Eq.~\ref{eq:def_invariant}~. The second step is the update of the centroids using the previously determined cluster assignment. It corresponds to the result of the optimization problem: $\argmin_{\mu_k} \sum_{i: x_i \in C_k} d(x_i,\mu_k)$, provided in following Proposition~\ref{prop2}.
\begin{prop}
\label{prop2}
The centroids update of the ST $K$-means algorithm are given by
\begin{equation}
\normalfont   \mu_k^{\star} := \frac{1}{\left | C_k \right |}  \sum_{i: x_i \in C_k} \mathcal{T}(x_i,\nu_{i,k}^{\star}),  \: \: \forall k 
   \label{eq:update}
\end{equation}
where $\left | C_k \right |$ denotes the cardinal of the set $C_k$, $\normalfont \nu_{i,k}^{\star}$ is the set of parameters of the TPS that best transforms the signal $x_i$ into the centroid $\mu_k$, that is, $\nu_{i,k}^{\star} = \argmin_{\nu \in \mathbb{R}^{2l}} \left \| \mathcal{T}(x_i,\nu) - \mu_k \right \|_2^2$ (proof in Appendix~\ref{proof:prop2}).%per the optimization problem
\end{prop}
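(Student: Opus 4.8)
The plan is to unfold the definition of the similarity measure in Eq.~\ref{eq:def_invariant} and recognise the centroid sub-problem $\argmin_{\mu_k}\sum_{i:x_i\in C_k} d(x_i,\mu_k)$ as a joint minimisation over the centroid $\mu_k$ and the per-datum transformation parameters $\{\nu_{i,k}\}_{i:x_i\in C_k}$. Since the inner minimisations over $\nu$ are independent across $i$ and do not couple the different data, one may interchange the sum and the minima:
\begin{align*}
&\min_{\mu_k}\sum_{i:x_i\in C_k}\min_{\nu\in\mathbb{R}^{2\ell}}\left\|\mathcal{T}(x_i,\nu)-\mu_k\right\|_2^2 \\
&\qquad= \min_{\mu_k}\ \min_{\{\nu_{i,k}\}_{i:x_i\in C_k}}\ \sum_{i:x_i\in C_k}\left\|\mathcal{T}(x_i,\nu_{i,k})-\mu_k\right\|_2^2 .
\end{align*}
First I would fix the transformation parameters at their minimising values $\nu_{i,k}^{\star}$ and regard the remaining objective $\mu_k\mapsto\sum_{i:x_i\in C_k}\|\mathcal{T}(x_i,\nu_{i,k}^{\star})-\mu_k\|_2^2$ as a function of $\mu_k\in\mathbb{R}^n$ alone — which is precisely the situation in the update step of the Lloyd iteration, where the $\nu_{i,k}^{\star}$ were already computed during assignment.

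Then I would observe that this reduced objective is a strictly convex quadratic in $\mu_k$ (a finite sum of squared Euclidean norms), so it admits a unique global minimiser characterised by the first-order condition. Setting the gradient $-2\sum_{i:x_i\in C_k}\bigl(\mathcal{T}(x_i,\nu_{i,k}^{\star})-\mu_k\bigr)$ to zero yields $\mu_k^{\star}=\frac{1}{|C_k|}\sum_{i:x_i\in C_k}\mathcal{T}(x_i,\nu_{i,k}^{\star})$, which is exactly Eq.~\ref{eq:update}. This is the classical fact that the sample mean minimises the sum of squared distances, applied here not to the raw data but to the optimally transformed data $\mathcal{T}(x_i,\nu_{i,k}^{\star})$.

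Finally I would close the loop on the apparent circularity in the statement — $\nu_{i,k}^{\star}$ is defined as the TPS parameter best transforming $x_i$ into $\mu_k$, yet $\mu_k$ is what we are solving for. I would phrase this as a characterisation of a joint minimiser $(\mu_k^{\star},\{\nu_{i,k}^{\star}\})$ of the displayed objective: block optimality in the $\mu_k$ block gives the averaging formula, while block optimality in each $\nu_{i,k}$ block gives $\nu_{i,k}^{\star}=\argmin_{\nu\in\mathbb{R}^{2\ell}}\|\mathcal{T}(x_i,\nu)-\mu_k^{\star}\|_2^2$, which is the condition in the proposition and is consistent with the assignment step. The main obstacle is exactly this point: making it fully rigorous requires care that the joint infimum is attained — the inner problem over $\nu$ is non-convex, so one must either assume or argue (e.g.\ via continuity of $\mathcal{T}$ in $\nu$ together with a suitable compactification of the landmark domain) that a minimising $\nu_{i,k}^{\star}$ exists — and that we only invoke convexity of the $\mu_k$-subproblem given the $\nu_{i,k}^{\star}$, not joint convexity, which does not hold. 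The rest is routine.
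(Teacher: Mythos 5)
Your proposal is correct and follows essentially the same route as the paper's proof: fix the optimal transformation parameters $\nu_{i,k}^{\star}$, observe that the remaining objective is a convex quadratic in $\mu_k$, and read off the average of the transformed data from the first-order condition. You are in fact more careful than the paper on two points it glosses over --- the circular dependence of $\nu_{i,k}^{\star}$ on $\mu_k$ (which you resolve via block optimality of a joint minimiser) and the attainment of the non-convex inner minimum --- and you avoid the sign slip in the paper's displayed gradient.
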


The averaging in Eq.~\ref{eq:update} is performed on the transformed version of the signals. The ST $K$-means thus considers the topology of the signal's space. A pseudo-code of the centroid update Eq.~\ref{eq:update} is presented in Algo.~\ref{algo:update}. 
 \begin{algorithm}
 \caption{Centroids Updates of ST $K$-means}
 \begin{algorithmic}[1]
 \renewcommand{\algorithmicrequire}{\textbf{Input:} }
 \renewcommand{\algorithmicensure}{\textbf{Output:} }
 \REQUIRE  Cluster  $C_k$,  TPS parameters $\left \{ \nu_{i,k}^{\star} \right \}_{i: x_i \in C_k}$% , k \in \left \{1,\dots,K \right \}
 \ENSURE  Centroids update $ \mu_{k}^{\star}$
 \STATE Initialize $\mu_k = 0$
 \FOR{$i: x_i \in C_k$}  
  \STATE Compute $\mu_k = \mu_k + \mathcal{T}_{\ell}(x_i;\nu_{i,k}^{\star})$, \; Eq.~\ref{eq:update}
 \ENDFOR
 \STATE $\mu_k^{\star} = \frac{\mu_k}{| C_k |}$
 \end{algorithmic} 
 \label{algo:update}
 \end{algorithm}

The ST $K$-means, which aims to minimize the distortion error Eq.~\ref{eq:RAI_Kmeans} is done by alternating between the two steps detailed above until convergence, as summarized in Algo.~\ref{algo:algo}. 
%
% , following the two-step approach described above, we can evaluate and optimize the distortion error Eq.~\ref{eq:RAI_Kmeans}. The centroids are updated according to the algorithm described in Algo.~\ref{algo:update}. The learning of the transformations parameters and the update of the centroids are then performed iteratively and
%
%
%
 \begin{algorithm}
 \caption{Spatial Transformer $K$-means}
 \begin{algorithmic}[1]
 \renewcommand{\algorithmicrequire}{\textbf{Input:}  }
 \renewcommand{\algorithmicensure}{\textbf{Output:} }
 \REQUIRE Initial centroids $\mu_k$, dataset $\left \{x_i \right \}_{i=1}^N$
 \ENSURE  Cluster partition  $\left \{C_k \right \}_{k=1}^{K}$
 \REPEAT
 \FOR{$i=1$ to $N$}  
  \FOR{$k=1$ to $K$}
  \STATE Compute and store $d(x_i,\mu_k)$ by solving Eq.~\ref{eq:def_invariant}
  \ENDFOR
  \STATE Assign $x_i$ to $C_l$ where $l = \argmin_{k} d(x_i,\mu_k)$
  \ENDFOR
 \STATE Update the centroid $\mu_k$ using Algo.~\ref{algo:update}
 \UNTIL{Convergence}
 \end{algorithmic} 
 \label{algo:algo}
 \end{algorithm}

The update in Eq.~\ref{eq:update}, induced by our similarity measure, alleviates a fundamental limitation of the standard $K$-means.
In fact, in the standard $K$-means, the average of the data belonging to a cluster $C_k$, $\frac{1}{ | C_k|}\sum_{i: x_i \in  C_k} x_i$, consists of an averaging of the signals without deforming them, which, as a result, does not account for the non-euclidean geometry of the signals \cite{klassen2004analysis,srivastava2005statistical}.

\subsection{Convergence of the Spatial Transformer $K$-means}
\label{sec:convergence}

As we mentioned, our development is motivated by the interest in proposing a novel way to think about invariance in an unsupervised fashion while conserving the interpretability and theoretical guarantees of the $K$-means algorithm. We propose here to prove the convergence of the ST $K$-means algorithm following the generalization of clustering algorithms via the Bregman divergence as developed in \citet{banerjee2005clustering}. In their work, they provide the class of distortion function that admits an iterative relocation scheme where a global objective function, such as the one in Eq.~\ref{eq:RAI_Kmeans}, is progressively decreased. We, therefore, prove that Algo.~\ref{algo:algo} monotonically decreases the distortion error of the ST $K$-means in Eq.~\ref{eq:RAI_Kmeans} which in turn implies that  Algo.~\ref{algo:algo} converges to a local optimal.

\begin{prop}
\label{prop:convergence}
Under the assumption that the spatial transformation optimization problem in Eq.~\ref{eq:def_invariant}, reaches a unique global minimum, the ST $K$-means algorithm described in Algo.~\ref{algo:algo} terminates in a finite number of step at a partition that is locally optimal (Proof in Appendix~\ref{app:proofs}).
\end{prop}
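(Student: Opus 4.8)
The plan is to follow the classical Lloyd-type convergence argument, cast in the Bregman-divergence language of \citet{banerjee2005clustering}. Write $\phi(\mathcal{C},\{\mu_k\}_{k=1}^K) := \sum_{k=1}^K \sum_{i:x_i\in C_k} d(x_i,\mu_k)$ for the distortion in Eq.~\ref{eq:RAI_Kmeans}. Since $d(x_i,\mu_k)=\min_{\nu}\|\mathcal{T}(x_i,\nu)-\mu_k\|_2^2\ge 0$, the functional $\phi$ is bounded below by $0$. The proof then reduces to showing that each of the two steps of Algo.~\ref{algo:algo} leaves $\phi$ non-increasing, and that the outer loop can only visit finitely many configurations, so that the monotone sequence of distortion values must stabilize after finitely many iterations at a fixed point of the iteration, which is by construction a locally optimal partition.

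First, the assignment step: with the centroids $\{\mu_k\}$ held fixed, re-assigning each $x_i$ to $C_{l(i)}$ with $l(i)=\argmin_k d(x_i,\mu_k)$ minimizes $\phi$ over all partitions $\mathcal{C}$ (the sum decouples across $i$), hence cannot increase $\phi$. Second, the centroid-update step: with the partition $\mathcal{C}$ held fixed, Proposition~\ref{prop2} states precisely that $\mu_k^{\star}=\frac{1}{|C_k|}\sum_{i:x_i\in C_k}\mathcal{T}(x_i,\nu_{i,k}^{\star})$ is a minimizer of $\mu\mapsto\sum_{i:x_i\in C_k}d(x_i,\mu)=\sum_{i:x_i\in C_k}\min_{\nu}\|\mathcal{T}(x_i,\nu)-\mu\|_2^2$; this is the assertion that the Bregman centroid (here for the squared-Euclidean generator) of the optimally transformed samples $\{\mathcal{T}(x_i,\nu_{i,k}^{\star})\}$ coincides with their arithmetic mean. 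Consequently $\sum_{i:x_i\in C_k}d(x_i,\mu_k^{\star})\le\sum_{i:x_i\in C_k}d(x_i,\mu_k)$ for every $k$, so the update step also cannot increase $\phi$. It is essential that the right-hand side of Proposition~\ref{prop2} is phrased in terms of $d$ itself, since this already accounts for re-optimizing the landmarks $\nu$ against the new centroid when bounding the post-update distortion.

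Finally, the termination argument: there are at most $K^{N}$ distinct partitions of $\{x_i\}_{i=1}^{N}$ into $K$ clusters, and by the two monotonicity claims $\phi$ is non-increasing along the run of Algo.~\ref{algo:algo}. If at some iteration the assignment step returns the same partition as at the previous iteration, then (using the uniqueness hypothesis, which makes both $d$ and the minimizer in Proposition~\ref{prop2} single-valued) the centroids are unchanged as well, so the configuration is a fixed point and the algorithm has terminated. Hence no partition can be revisited without the algorithm stopping: assuming ties in the assignment step are broken by a fixed rule, the sequence of partitions is eventually constant, which happens after at most $K^{N}$ iterations. The terminal $(\mathcal{C},\{\mu_k\})$ is locally optimal in the sense that neither re-assigning points nor recomputing centroids can further decrease Eq.~\ref{eq:RAI_Kmeans}.

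The main obstacle is the interaction between the inner, generally non-convex optimization over the landmarks $\nu$ in Eq.~\ref{eq:def_invariant} and the outer Lloyd iteration, and this is exactly what the hypothesis of the Proposition controls. If the inner problem attained its minimum at a non-unique $\nu$, the map $d(\cdot,\cdot)$ --- and hence the centroid update of Proposition~\ref{prop2} --- would be set-valued, the ``transformed data'' picture that lets us invoke the framework of \citet{banerjee2005clustering} would break, and the assignment step could cycle among partitions of equal cost. Under the stated uniqueness assumption these pathologies disappear and the argument above goes through; the only remaining bookkeeping is to check that the distortion tracked by the algorithm is literally Eq.~\ref{eq:RAI_Kmeans} at every step, including after the $\nu$'s are refit, which is immediate from the definition of $d$.
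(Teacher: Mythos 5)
Your proof is correct, but it follows a genuinely different track from the paper's. The paper does not spell out the two-step monotone-descent argument at all: instead it verifies a structural property, namely that $d(x,\mu)=\|\mathcal{T}(x,\nu^{\star})-\mu\|_2^2$ can be written in the Bregman form $\phi_{\nu^{\star}}(x)-\phi_{0}(\mu)-\langle \mathcal{T}(x,\nu^{\star})-\mathcal{T}(\mu,0),\nabla\phi_{0}(\mu)\rangle$ with $\phi_{\nu}(y)=\langle\mathcal{T}(y,\nu),\mathcal{T}(y,\nu)\rangle$, argues that $\phi_{\nu^{\star}}$ is strictly convex (using non-negativity of the data and the fact that the spatial transformer acts linearly on pixel values), and then invokes the convergence theorem for Bregman hard clustering of \citet{banerjee2005clustering} as a black box. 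You instead unfold what that theorem proves: the assignment and centroid-update steps each leave the distortion of Eq.~\ref{eq:RAI_Kmeans} non-increasing, the distortion is bounded below by zero, there are at most $K^{N}$ partitions, and with a fixed tie-breaking rule the iteration must reach a fixed point. Your version is more self-contained and makes explicit where the uniqueness hypothesis enters (single-valuedness of $d$ and of the update, so that a repeated partition forces termination), which the paper leaves implicit; the paper's version buys a connection to a general family of distortions at the cost of a somewhat delicate convexity verification. One small strengthening worth noting on your side: the update-step inequality does not actually require Proposition~\ref{prop2} to produce an exact minimizer of $\mu\mapsto\sum_{i}d(x_i,\mu)$ after the landmarks are refit; the majorize--minimize chain $\sum_{i} d(x_i,\mu_k^{\star})\le\sum_{i}\|\mathcal{T}(x_i,\nu_{i,k}^{\star})-\mu_k^{\star}\|_2^2\le\sum_{i}\|\mathcal{T}(x_i,\nu_{i,k}^{\star})-\mu_k\|_2^2=\sum_{i} d(x_i,\mu_k)$ already yields monotonicity while sidestepping the question of whether the arithmetic mean of the transformed samples exactly minimizes the re-optimized objective.
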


\subsection{Geometrical Interpretation of the Similarity Measure}
\label{sec:geometry}

One of the great benefit of the $K$-means algorithm is the interpretability of the regions composing its partitioning. In particular, they are related to Voronoi diagrams which are well studied partitioning techniques \cite{aurenhammer1991voronoi,aurenhammer2013voronoi}.
Following this framework, we propose now to highlight the regions defined by the ST $K$-means algorithm. This is achieved by analysing the following sets $\forall k \in \left \{1,\dots,K \right \}$
\begin{equation}
    R_{k} = \left \{ x \in \mathbb{R}^n | d(x,\mu_k) \leq d(x,\mu_j),~~ \forall j \neq k \right \},
\end{equation}
where we recall $d(x,\mu_k) =   \min_{\nu \in \mathbb{R}^{2 \ell}} \left \| \mathcal{T}(x,\nu) - \mu_k \right \|_2^2$. Such a partitioning falls in the framework of a special type of Voronoi diagram.
\begin{prop}
The partitioning induced by the ST $K$-means corresponds to a weighted Voronoi diagram where each region's size depends on the per data spatial transformations (proof and details in Appendix~\ref{proof:voronoi}).
\end{prop}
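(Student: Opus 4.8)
The plan is to show that the defining inequalities of the regions $R_k$ are, after a change of variables, exactly those of a power diagram (Laguerre--Voronoi diagram) whose weights are the per-datum transformation gains. First I would recall from the spatial transformer construction (Appendix on the TPS) that for a \emph{fixed} set of landmarks $\nu$ the map $x\mapsto\mathcal{T}(x,\nu)$ is a linear operator on the pixel values: the grid generator depends only on $\nu$, and bilinear sampling returns each output pixel as a coordinate-dependent convex combination of input pixels. Write $\mathcal{T}(x,\nu)=W(\nu)x$, and note that the regular grid $\nu_0$ gives $\mathcal{T}(x,\nu_0)=x$. Hence the orbit $\mathcal{M}_x:=\{\mathcal{T}(x,\nu):\nu\in\mathbb{R}^{2\ell}\}$ contains $x$, and $d(x,\mu_k)=\mathrm{dist}^2(\mu_k,\mathcal{M}_x)\le\|x-\mu_k\|_2^2$.

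Next I would introduce, for each $k$ and each $x$, the \emph{transformation gain} $w_k(x):=\|x-\mu_k\|_2^2-d(x,\mu_k)\ge 0$, which is precisely the amount by which the optimal deformation $\nu_{x,k}^{\star}=\argmin_{\nu}\|\mathcal{T}(x,\nu)-\mu_k\|_2^2$ of $x$ toward $\mu_k$ reduces the Euclidean distortion; it is fully determined by the per-data spatial transformations. Substituting $d(x,\mu_k)=\|x-\mu_k\|_2^2-w_k(x)$ into the definition of $R_k$ and expanding $\|x-\mu_k\|_2^2=\|x\|_2^2-2\mu_k^\top x+\|\mu_k\|_2^2$, the quadratic term $\|x\|_2^2$ cancels on both sides of every inequality, leaving
\begin{equation}
R_k=\Big\{x\in\mathbb{R}^n:\ 2(\mu_j-\mu_k)^\top x\le \|\mu_j\|_2^2-\|\mu_k\|_2^2+w_k(x)-w_j(x),\ \forall j\neq k\Big\}.
\end{equation}
This is exactly the cell of a power diagram with sites $\mu_k$ and weights $w_k$, except that the weights depend on $x$: when $w_k\equiv 0$ (no transformation available) it is the ordinary Euclidean Voronoi partition of $K$-means, and when the $w_k$ are constant it is the classical Laguerre diagram with affine bisectors. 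In general the boundary between $R_k$ and $R_j$ is the hyperplane $\{x:2(\mu_k-\mu_j)^\top x=\|\mu_k\|_2^2-\|\mu_j\|_2^2\}$ displaced pointwise by $w_k(x)-w_j(x)$, so the $R_k$ are deformations of power-diagram cells whose extent is governed by the per-data transformation gains — a centroid whose orbit absorbs more of the surrounding data (larger $w_k$) claims a larger region.

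To finish I would verify the bookkeeping that makes this a genuine diagram: the $R_k$ cover $\mathbb{R}^n$ since every $x$ has a nearest centroid under $d$; their interiors are disjoint away from the set where two of the values $d(x,\mu_k)$ coincide; and in the inequality form above, region size is monotone in the weights, which yields the stated dependence on the spatial transformations. The main obstacle is conceptual rather than computational: because $w_k(x)$ genuinely varies with $x$ (the minimizer $\nu_{x,k}^{\star}$ need not be locally constant), the partition is not literally one of the textbook weighted Voronoi diagrams, so the work is to pin down the right notion — a generalized power/Laguerre diagram with position-dependent weights that specializes to the classical power diagram exactly on regions where the optimal deformation is locally stable — and to argue that the bisectors are non-degenerate (do not fill positive volume) under the uniqueness assumption on the inner optimization inherited from the convergence proposition.
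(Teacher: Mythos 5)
Your proof is correct, but it takes a genuinely different route from the paper's. The paper writes $\mathcal{T}(x,\nu)=A(\nu)x$ for a (bi)linear operator $A(\nu)$, assumes $A(\nu)$ is invertible, and rewrites $d(x,\mu_k)=\min_\nu\|x-A(\nu_{x,k})^{-1}\mu_k\|^2_{A(\nu_{x,k})^TA(\nu_{x,k})}$, i.e., it identifies the partition with an \emph{anisotropic} weighted Voronoi diagram whose sites are the back-transformed centroids $A(\nu_{x,k})^{-1}\mu_k$ and whose weights are the full data- and centroid-dependent metric tensors $A(\nu_{x,k})^TA(\nu_{x,k})$. You instead collapse the effect of the transformation into a scalar \emph{additive} weight $w_k(x)=\|x-\mu_k\|_2^2-d(x,\mu_k)\ge 0$ and complete the square to obtain a power/Laguerre-type diagram with position-dependent weights. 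Your algebra is right (the $\|x\|_2^2$ terms do cancel, and $w_k(x)\ge 0$ follows from the identity transform lying in the orbit), and your formulation has two advantages: it does not need the invertibility of $A(\nu)$, which the paper itself concedes can fail for the TPS, and it makes the degeneration to the ordinary Euclidean Voronoi partition ($w_k\equiv 0$) and to the classical power diagram (constant $w_k$) completely transparent. What it gives up is the geometric information the paper's metric-tensor form retains about \emph{how} each transformation warps the ambient metric, rather than merely by how much it shrinks the residual. Both arguments share the same honest caveat — the weights (scalar in your case, tensorial in the paper's) vary with $x$, so the object is a generalized weighted Voronoi diagram rather than a textbook one — and you are more explicit than the paper about why this matters and about the covering/non-degeneracy bookkeeping, which the paper settles by citation only.
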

While the Euclidean $K$-means induces a Voronoi diagram where each region is a polytope, the ST $K$-means does not impose such a constraint of its geometry. The similarity measure we propose adapts the geometry of each data to each centroid and thus induces a specific metric space for each data-centroid pair. In particular, for each data-centroid pair, the ST $K$-means has a particular metric that induces the boundary of the regions. In a more general setting, each region is defined as the orbit of the centroid with respect to the transformations induced by the spatial transformer, thus defining regions that depend on the orbit's shape instead of polytopal ones.

This geometric observation can lead to efficient initializations for the ST $K$-means \cite{arthur2006k}, as well as the evaluation of its optimality \cite{bhattacharya2016tight}. Besides, one can perform in depth study to understand the shape of the regions spanned by our approach to understand the fail cases of the algorithm for a particular application \citet{har2014complexity,xia2018unified}. One can also compare the partitioning achieved in our approach with the one of DNN as in \citet{NEURIPS2019_0801b20e} to gain more insights into both models.

\subsection{Computational Complexity \& Parameters}
The time complexity of ST $K$-means is $O(NK ( \ell^3 + \ell n))$. In fact, the ST $K$-means computes a TPS of computational complexity $O(\ell^3 + \ell n)$ for each sample of the $N$ samples and each of the $K$ centroids, as in Eq.~\ref{eq:def_invariant}. In practice, $\ell$ is of the order $2^{6}$.
The number of parameters of the model is $ 2\ell  \times N \times K$; it depends on the number of samples, clusters, and landmarks.  

%
%The time complexity of the TPS is $O(\ell^3 + \ell n)$, with $n$ the number of pixels in the image and $\ell$ the number of landmarks. ST $K$-means computes a TPS for each sample and each centroid (recall Eq.~\ref{eq:def_invariant}) leading to its time complexity being $O(NK ( \ell^3 + \ell n))$. 
%
To speed up the computation, we $(i)$ pre-compute the matrix inverse responsible for the dominating cubic term, see Appendix~\ref{app:TPS} for implementation details regarding the TPS, and $(ii)$ implement ST $K$-means on GPU with SymJAX \cite{balestriero2020symjax} where high parallelization renders the practical computation time near constant with respect to the number of landmarks as we depict in Fig.~\ref{fig:computational_time}.

\begin{figure}[!h]
\begin{minipage}{.02\linewidth}
\rotatebox{90}{Time (min.)}
\end{minipage}
\begin{minipage}{.47\columnwidth}
    \centering
    \includegraphics[width=\linewidth]{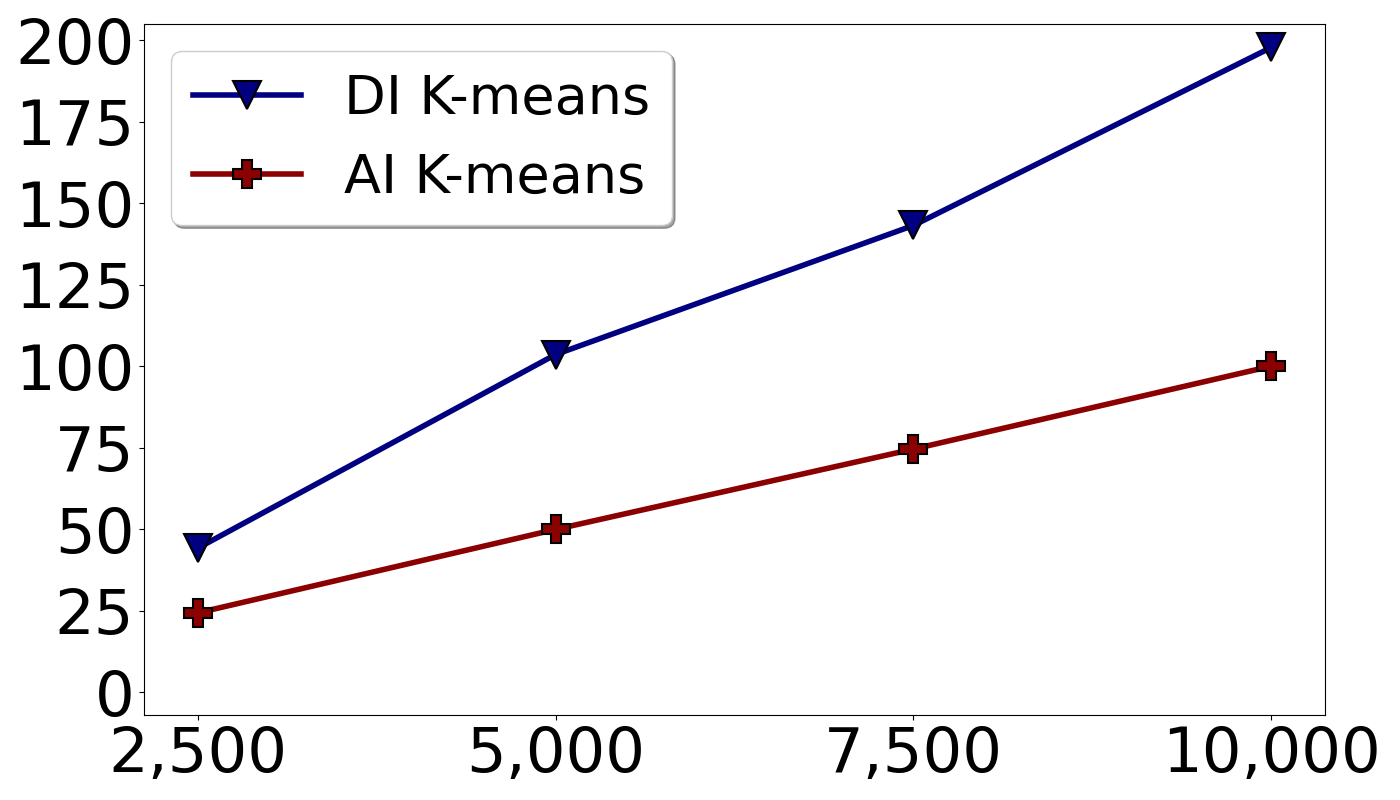}
\end{minipage}
\begin{minipage}{.47\columnwidth}
    \centering
    \includegraphics[width=\linewidth]{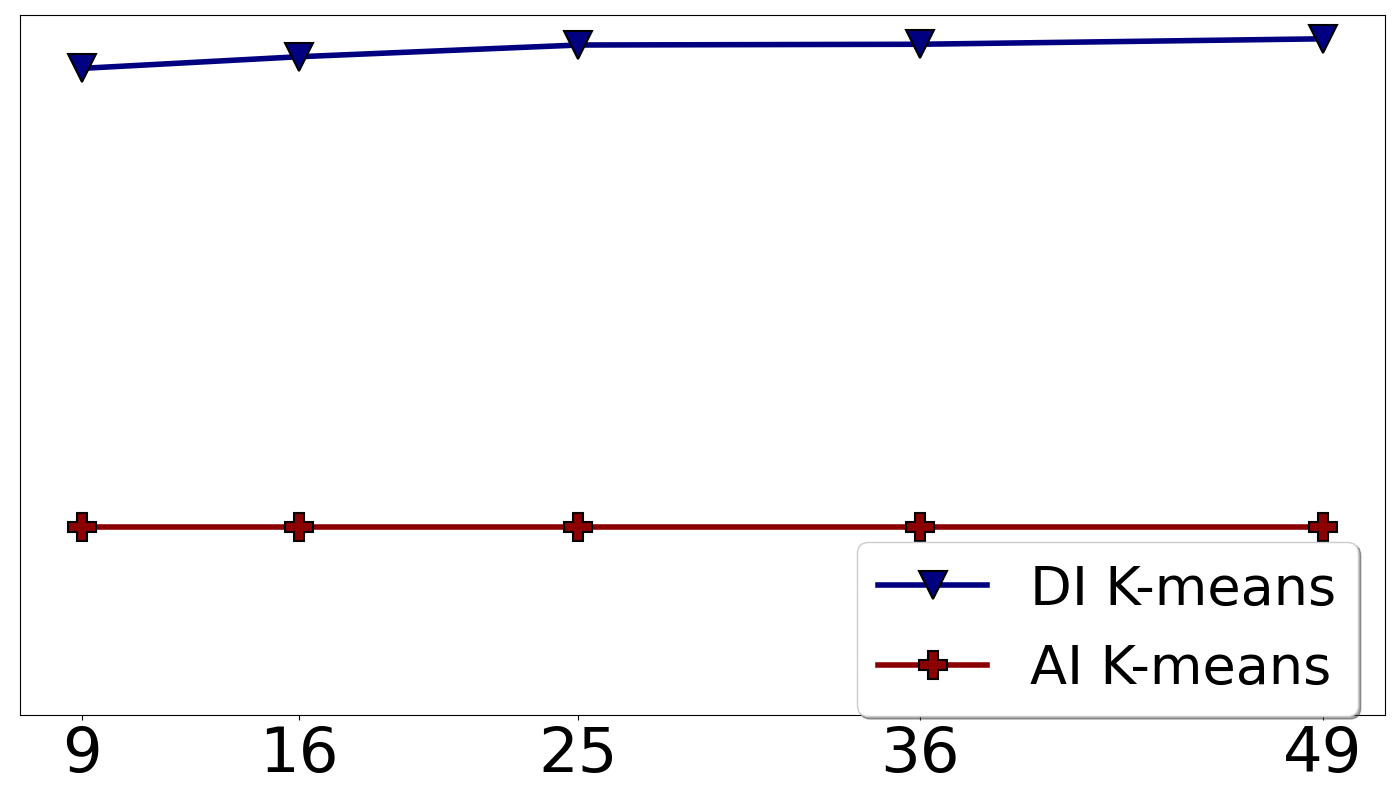}
\end{minipage}
\begin{minipage}{\columnwidth}
\hspace{1.1cm} Training Size ($N$) \hspace{.9cm} Number Landmarks ($\ell$)
\end{minipage}
\caption{ \textbf{Computational Training Time} - Comparison between our ST $K$-means and the Affine Invariant (AI) $K$-means computational times on the Arabic Characters dataset. The input pixel size is $n=1024$. (\textit{Left}) shows the computational time for varying training set sizes and $\ell=7^2$. (\textit{Right}) shows the computational time as a function of the number of landmarks, $\ell$, for $N=10,000$. Since the AI $K$-means does not use the TPS algorithm, its computational time is constant as a function of the number of landmarks. We can observe that our process to speed up the computation enables the tractability of the ST $K$-means.}
\label{fig:computational_time}
\end{figure}

\begin{figure*}[t]
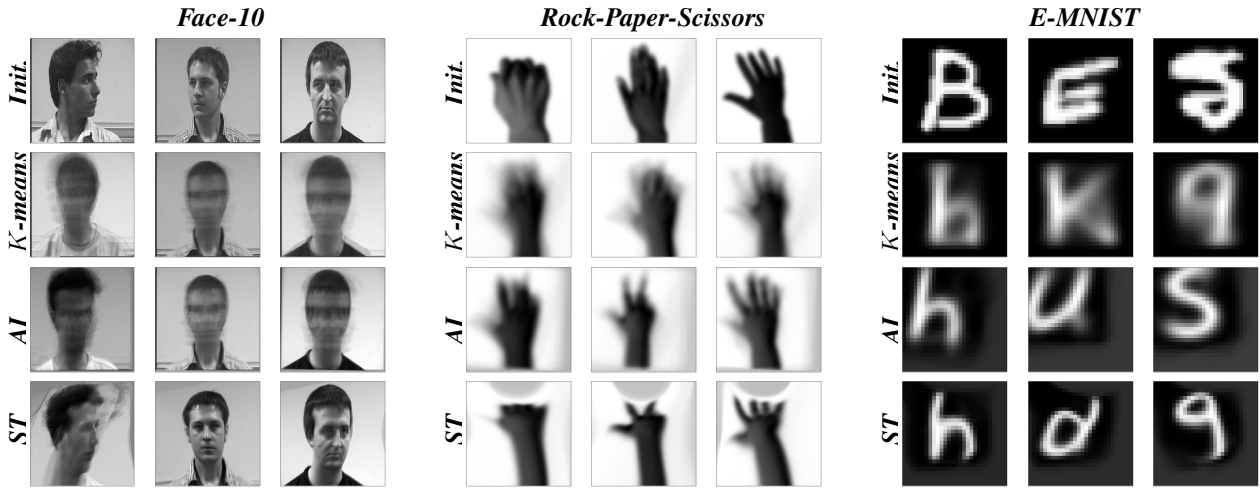


\begin{minipage}[t]{.33\textwidth}
    \centering
    \textbf{\textit{Face-10}}
\end{minipage}
\begin{minipage}[t]{.33\textwidth}
        \centering
    \textbf{\textit{Rock-Paper-Scissors}}
\end{minipage}
\begin{minipage}[t]{.33\textwidth}
        \centering
    \textbf{\textit{E-MNIST}}
\end{minipage}

\begin{minipage}{.008\linewidth}
\rotatebox{90}{ \hspace{.1cm} \textbf{\textit{ST \hspace{.75cm} AI \hspace{.65cm} $K$-means \hspace{.44cm} Init.}}}
\end{minipage}
\begin{minipage}{0.31\linewidth}
\foreach \c in {0,1,7}{
    \begin{minipage}{0.28\linewidth}
    \includegraphics[width=\linewidth]{images/best_facepos_centroid_init\c.png}\\
    \includegraphics[width=\linewidth]{images/kmean_facepos_centroid_final\c.png}\\
    \includegraphics[width=\linewidth]{images/aff_facepos_centroid_final\c.png}\\
    \includegraphics[width=\linewidth]{images/best_facepos_centroid_final\c.png}
    \end{minipage}
}
\end{minipage}
\hfill \hfill
\begin{minipage}{.008\linewidth}
\rotatebox{90}{ \hspace{.1cm} \textbf{\textit{ST \hspace{.75cm} AI \hspace{.65cm} $K$-means \hspace{.44cm} Init.}}}
\end{minipage}
\begin{minipage}{0.31\linewidth}
\foreach \c in {0,...,2}{
    \begin{minipage}{0.28\linewidth}
    \includegraphics[width=\linewidth]{images/best_rockpaper_centroid_init\c.png}\\
    \includegraphics[width=\linewidth]{images/kmean_rockpaper_centroid_final\c.png}\\
    \includegraphics[width=\linewidth]{images/aff_rockpaper_centroid_final\c.png}\\
    \includegraphics[width=\linewidth]{images/best_rockpaper_centroid_final\c.png}
    \end{minipage}
}
\end{minipage}
\hfill \hfill
\begin{minipage}{.008\linewidth}
\rotatebox{90}{ \hspace{.1cm} \textbf{\textit{ST \hspace{.75cm} AI \hspace{.65cm} $K$-means \hspace{.44cm} Init.}}}
\end{minipage}
\begin{minipage}{0.31\linewidth}
\foreach \c in {0,2,3}{
    \begin{minipage}{0.28\linewidth}
    \includegraphics[width=\linewidth]{images/best_emnist_centroid_init\c.png}\\
    \includegraphics[width=\linewidth]{images/kmean_emnist_centroid_final\c.png}\\
    \includegraphics[width=\linewidth]{images/aff_emnist_centroid_final\c.png}\\
    \includegraphics[width=\linewidth]{images/best_emnist_centroid_final\c.png}
    \end{minipage}
}
\end{minipage}

\caption{ \textbf{Centroids} - We depict some centroids for the different $K$-means algorithms. The centroid at initialization are displayed in the \textit{nth{1}} row. The centroids learned by $K$-means are shown in the \textit{\nth{2} row}, by the Affine invariant $K$-means in the \textit{\nth{3} row}, and by our ST $K$-means in the \textit{\nth{4} row}. 
By comparing the results of the AI $K$-means (\textit{\nth{3} row}) with the standard $K$-means (\textit{\nth{2} row}), we can see that using only affine transformations slightly improves the $K$-means centroids and reduces the superposition issue that $K$-means suffers from. 
%We can see how the ability of the method to adapt the data based solely on affine transformations  slightly improves the $K$-means centroids and reduces the superposition issue that $K$-means suffers from. 
By comparing the results of our ST $K$-means (\textit{\nth{4} row}) with the other methods, it is clear that using non-rigid transformations significantly improves the quality of the centroids, making them sharper and removing the issue related to the non-additiveness of images. Note that $K$-means iteratively updates the centroids and cluster assignments, as such, the class associated to a specific centroid usually changes during training (additional centroid vizualisations are proposed in Appendix~\ref{app:sup_centroids}).
%By introducing the proposed more general class of transformation via the TPS, we see that the final centroids are much crisper as the data have been successfully mapped through their corresponding centroids prior to performing the centroid update, Eq.~\ref{eq:update}.
}
    \label{fig:centroids}
\end{figure*}

\begin{figure*}[p!]
\begin{minipage}{.32\linewidth}
    \centering
\textbf{\textit{Raw Data}}
\end{minipage}
\begin{minipage}{.32\linewidth}
    \centering
\textbf{\textit{Affine Invariant}}
\end{minipage}
\begin{minipage}{.32\linewidth}
    \centering
\textbf{\textit{Spatial Transformer}}
\end{minipage}
\begin{minipage}{0.03\linewidth}
\rotatebox{90}{\hspace{.6cm} \textbf{MNIST \#10} \hspace{2.cm} \textbf{Audio-MNIST \#10} \hspace{1.1cm} \textbf{Face-10 \#13} \hspace{1.5cm} \textbf{Rock-Paper-Scissors \#3}}
\end{minipage}
\begin{minipage}{0.95\linewidth}
    \centering
    \includegraphics[width=\linewidth]{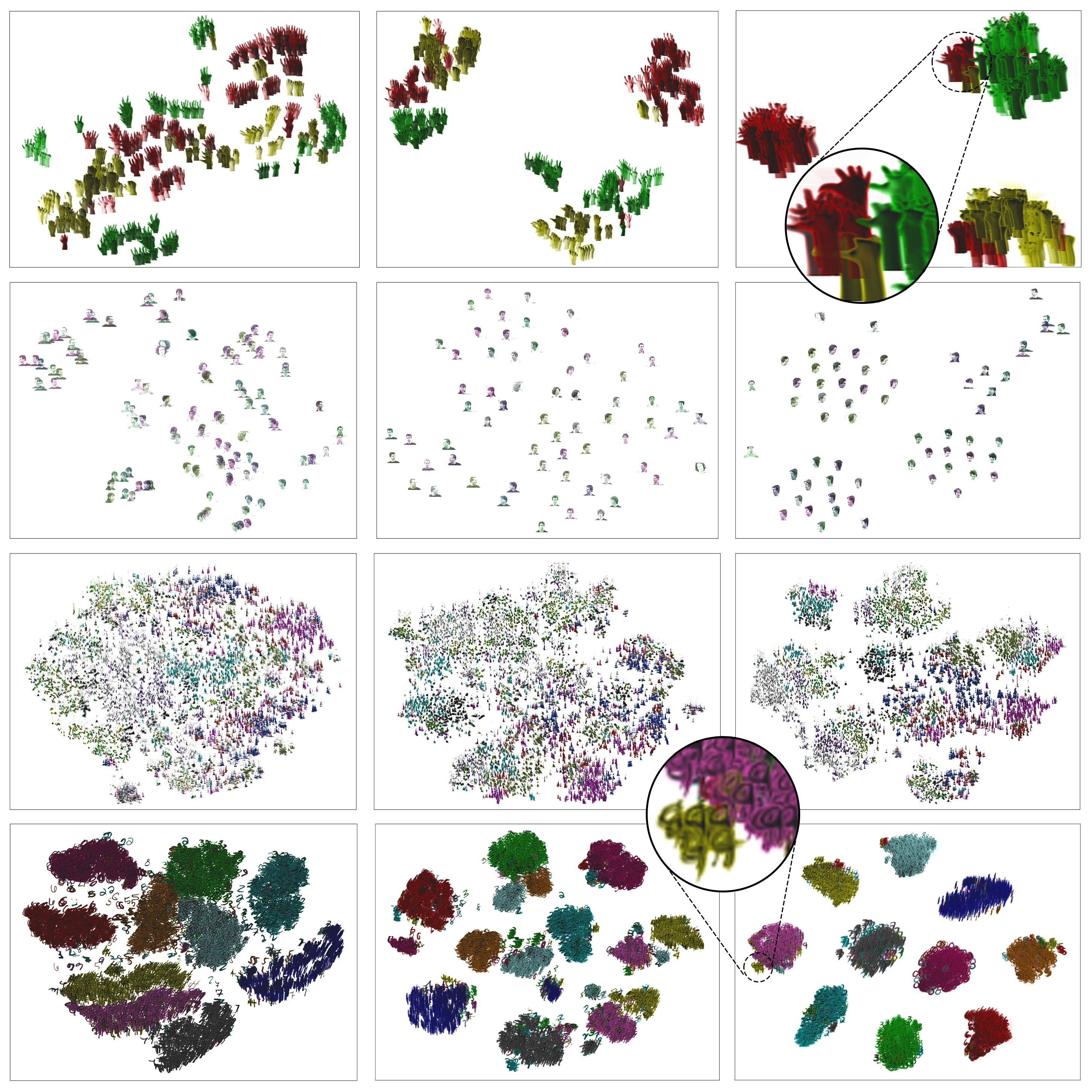}
\end{minipage}
    \caption{\textbf{$2$-dimensional t-SNE} - (\# denotes the number of clusters) - We suggest the reader to zoom in the plots to best appreciate the visualizations. - The raw data (\textit{left column}), the affinely transformed data using the AI distance, i.e., we extract the best affine transformation of the data that corresponds to the centroid it was assigned and perform the t-SNE on these affinely transformed data, (\textit{middle column}), the data transformed with respect to the TPS as per Eq.~\ref{eq:def_invariant}, i.e., the same process as previously mentioned but we consider the spatial transformer instead, and then perform the dimension reduction on these transformed data, (\textit{right column}). Each row corresponds to a different datasets: Rock-Paper-Scissors, Face-10, AudioMNIST, and MNIST  are depicted from the top to bottom row. For all the figures, the colors of the data represent their ground truth labels. We observe that across datasets, both the affine transformations learned on the data and the non-rigid transformations help to define more localized clusters. One can observe that for the Face-10 dataset, while the dataset contains $13$ clusters, we can see that the ST $K$-means induced transformations lead to a $2$-dimensional space where the faces are clustered $3$ majors orientations. The top left cluster corresponds to faces pointing left, the bottom one face pointing right, and the bottom right one face pointing front. We also propose to zoom-in two locations where the ambiguity in the transformation induced by the spatial transformer is noticeable. In particular, we show two cases where the non-rigid transformations are too large for certain samples leading to an erroneous clustering assignment, e.g., in the MNIST dataset, the yellow samples in the lense are initial instance of the class $4$ that have been transformed into digit that geometrically ressemble the centroid of the cluster $9$, thus being assigned to the $9$'s cluster. The same concept is shown in the Rock-Paper-Scissors lense where some instance of the classes rock and paper are assigned to the class scissors (additional t-SNE vizualisations are proposed in Appendix~\ref{app:sup_tsne}).}
\label{fig:tsne}
\end{figure*}

\section{Experimental Setup}
\label{sec:experiments}

% We evaluate and compare the ST $K$-means with competing unsupervised models on various datasets described in Appendix~\ref{ap:data}. 
%
In this section, we detail the experimental settings followed to evaluate the performances of our model. 
For all the experiments, the number of clusters is set to be the number of classes the dataset contains for all clustering algorithms. 
The various datasets and their train-test split to optimize the model's parameters and update the centroids of the different models are described in Appendix~\ref{ap:data}. 
% For each dataset, we compare our method with the $K$-means algorithm, the Affine Invariant $K$-means \cite{fitzgibbon2002affine}, which only considered affine transformations. 
%
% We also compare ST $K$-means with various clustering techniques using Deep Learning.

\subsection{Evaluation Metrics}

For all the experiments, the accuracy is calculated using the metric proposed in \citet{yang2010image} and defined as
\setlength{\abovedisplayskip}{3pt}
\setlength{\belowdisplayskip}{3pt}
%\small
\begin{align}
 \text{Accuracy} =    \max_{m} \tfrac{1}{N} \sum_{i=1}^{N} 1_{ \left \{ l_i = m(\hat{l}_i) \right \}}~~,
\label{eq:cluster} 
\end{align}%\normalsize
where $l_i$ is the ground-truth label, $\hat{l}_i$ the cluster assignment and $m$ all the possible one-to-one mappings between clusters and labels. 
The results in Table~\ref{table:compare} are taken as the best score on the test set based on the ground truth labels among $10$ runs as in \citet{xie2016unsupervised}. We also provide on the same run the normalized mutual information (NMI) \cite{romano2014standardized}, and adjusted rand index (ARI) \cite{hubert1985comparing}.

\subsection{Cross Validation Settings}

We provide in Appendix.~\ref{app:compete} the details regarding the benchmark models and their cross-validation settings.

Our model requires the cross-validation of hyper-parameters:  the number of landmarks and the learning rate to learn the similarity measure in Eq.~\ref{eq:def_invariant}. 
However, the clustering framework does not allow the use of label information to perform the cross-validation of the parameters. We thus need to find a proxy for it to determine the optimal model parameters. 
Interestingly, the distortion error related used in the ST $K$-means, Eq.~\ref{eq:RAI_Kmeans}, appears to be negatively correlated to the accuracy, as displayed in Fig.~\ref{fig:accu_vs_dist}. 
Note that the use of the distortion error is commonly used as a fitness measure in $K$-means, for example, when cross-validating the number of clusters. 
% to employ
\begin{figure}[!h]
\begin{minipage}{.04\columnwidth}
\rotatebox{90}{\hspace{.45cm} Accuracy}
\end{minipage}
\begin{minipage}{.96\columnwidth}
    \centering
    \includegraphics[trim=0 0.5cm 0 0,width=1\columnwidth]{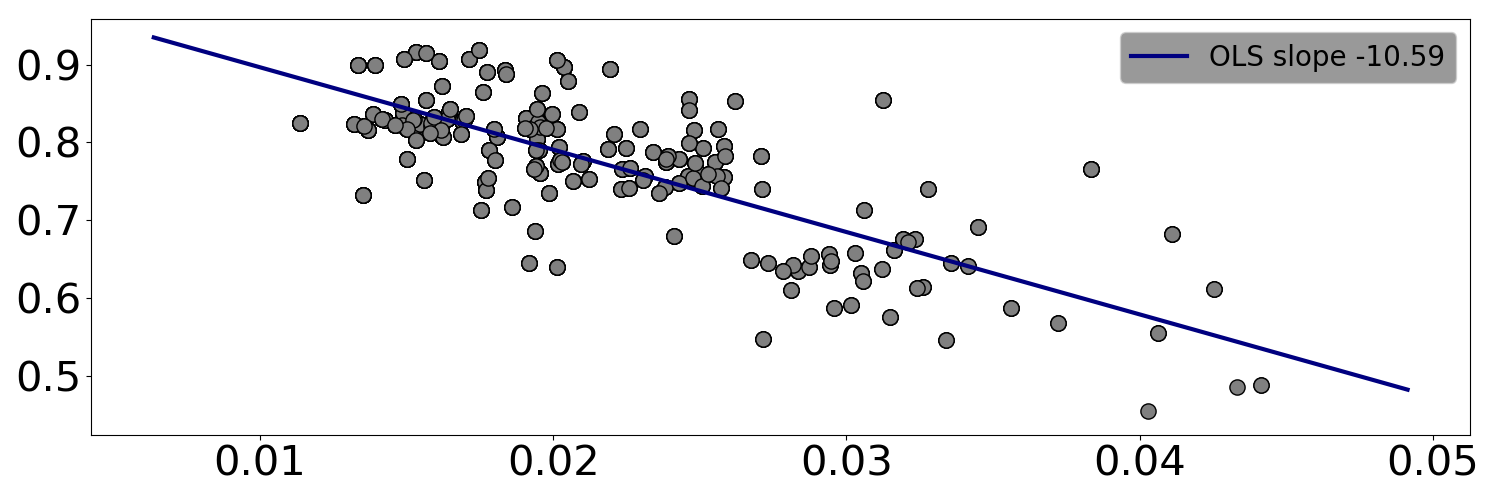}
\begin{minipage}{\columnwidth}
\centering
\hspace{.5cm} Distortion Error
\end{minipage}
\end{minipage}
\vspace{-.25cm}
\caption{\textbf{Accuracy vs Distortion Error} -  
Clustering accuracy, Eq.~\ref{eq:cluster}, of ST $K$-means algorithm on the MNIST dataset as a function of the distortion error, Eq.~\ref{eq:RAI_Kmeans}, using the similarity measure, Eq.~\ref{eq:def_invariant}. Each gray dot is associated with a specific set of hyper-parameters, e.g., the learning rate and the number of landmarks for the spatial transformer. 
%, and random initialization 
%The x-axis denotes the distortion error, Eq.~\ref{eq:RAI_Kmeans}, using the similarity measure, Eq.~\ref{eq:def_invariant}, and the y-axis the clustering accuracy, defined in Eq.~\ref{eq:cluster}, for the ST $K$-means algorithm applied on the MNIST dataset. Each gray dot corresponds to specific hyper-parameters: learning rate, number of landmarks for the TPS, and random initialization. 
%
The accuracy is negatively correlated to the distortion error (see the blue line corresponding to the ordinary least square fit), indicating that the distortion error is an appropriate metric to cross-validate the hyper-parameters of the ST $K$-means algorithm, which is crucial in an unsupervised setting as the labels are not available. }
\label{fig:accu_vs_dist}
\end{figure}

%
% As we stand in the framework of clustering, the cross-validation can not be performed using the existing label. Therefore, the accuracy can not be leveraged as to select the appropriate model. Interestingly, the distortion error related to the ST $K$-means, defined in Eq.~\ref{eq:RAI_Kmeans}, is negatively correlated to the accuracy, see Fig.~\ref{fig:accu_vs_dist}. Therefore, the cross-validation of the hyper-parameters of the model can be achieved using the distortion error directly, which does not require any label.

We cross-validate the number of landmarks, $\ell$, which defines the resolution of the transformation, which we optimize over the following grid, $\left [3^2,4^2,5^2,6^2,7^2,8^2 \right ]$. 
Then, the learning of the landmarks, $\nu$, is done via Adam optimizer.
The learning rate is picked according to $\left [10^{-4},5\times 10^{-4},10^{-3},5\times 10^{-3},10^{-2},5\times 10^{-2} \right ]$. 
We train our method for $150$ epochs for all the datasets, with batches of size $64$. 
As for $K$-means and AI $K$-means, the centroids' initialization of the ST $K$-means is performed by the $K$-means$++$ algorithm. 
Importantly, the same procedure is applied to all datasets. 
% $\left [0.0001,0.0005,0.001,0.005,0.01,0.05 \right ]$.
% developed by \cite{arthur2006k} to speed up the convergence of the $K$-means algorithm. 

Note that during the training, both the similarity measure in Eq.~\ref{eq:def_invariant} and the clustering update are performed, Eq.~\ref{eq:cluster}. During the algorithm's testing phase, the centroids remain fixed, and only the similarity measure is performed to assign each testing datum to a cluster.

\section{Results and Interpretations}
\label{sec:res}
In this section, we report and interpret the results obtained by our ST $K$-means and competing models. 
\subsection{Clustering Accuracy}
We report in Table~\ref{table:compare} the accuracy of the different models considered on the different datasets. 
Our approach shows to outperform existing models on most datasets. Our model equals the performance of AI $K$-means on Affine MNIST and is only outperformed by VaDE (MLP) on MNIST.

\begin{table*}[t]
\centering
\caption{Clustering results in $\%$ of the test set accuracy Eq.~\ref{eq:cluster} - Following the benchmarks evaluation method, the best accuracy (ACC) over $10$ runs are displayed - We also provide the associated normalized mutual information (NMI) and adjusted rand index (ARI) - the number of clusters is denoted by \# next to the dataset name and where ($\dagger$): \citet{xie2016unsupervised} and ($\ddagger$): \citet{jiang2016variational}.}
 %\begin{adjustbox}{width=\textwidth}
 \setlength\tabcolsep{3.pt}
\begin{tabular}{|l|c|c|c|c|c|c|c|c|c||c|c|c|c|c|c|c|c||c|c|c|c|c|c|c|c|}
\cline{2-26}
\multicolumn{1}{c|}{}&\rotatebox{90}{\textit{Deep Learning}}&
\rotatebox{90}{\textbf{Aff. MNIST} \#10}& \rotatebox{90}{\textbf{Diffeo. MNIST} \#10} & \rotatebox{90}{\textbf{MNIST} \#10}& \rotatebox{90}{\textbf{Audio MNIST} \#10} &\rotatebox{90}{\textbf{E-MNIST} \#26} &\rotatebox{90}{\textbf{Rock-Paper-Sci.} \#3} &\rotatebox{90}{\textbf{Face-10} \#13} & \rotatebox{90}{\textbf{Arabic Char.} \#28}
& \rotatebox{90}{\textbf{Aff. MNIST} \#10}& \rotatebox{90}{\textbf{Diffeo. MNIST} \#10} & \rotatebox{90}{\textbf{MNIST} \#10}& \rotatebox{90}{\textbf{Audio MNIST} \#10} &\rotatebox{90}{\textbf{E-MNIST} \#26} &\rotatebox{90}{\textbf{Rock-Paper-Sci.} \#3} &\rotatebox{90}{\textbf{Face-10} \#13} & \rotatebox{90}{\textbf{Arabic Char.} \#28}
&
\rotatebox{90}{\textbf{Aff. MNIST} \#10}& \rotatebox{90}{\textbf{Diffeo. MNIST} \#10} & \rotatebox{90}{\textbf{MNIST} \#10}& \rotatebox{90}{\textbf{Audio MNIST} \#10} &\rotatebox{90}{\textbf{E-MNIST} \#26} &\rotatebox{90}{\textbf{Rock-Paper-Sci.} \#3} &\rotatebox{90}{\textbf{Face-10} \#13} & \rotatebox{90}{\textbf{Arabic Char.} \#28}
\\ \cline{3-26}
\multicolumn{1}{c|}{}& &\multicolumn{8}{c||}{ACC}&\multicolumn{8}{c||}{NMI}&\multicolumn{8}{c|}{ARI } \\ \toprule
$K$-means & \xmark  &   68 & 61 & 53 & 10 & 39 & 40 & 20 & 19  &    - &-   & 50 & 1 & 39 & 5 & 18 & 27 &   - & -  & 39 & 0 & 21 & 4 & 0 & 1 \\\hline
AI $K$-means & \xmark &  \textbf{100}  &  91 & 75 & 29 & 48& 72  & 31 & 30 & - & -  & 62 & 18 & 45 & 30 & 30 & 37 &  -  & -  & 54 & 10 & 26 & 24  & 3 & 17 \\ \hline
\textbf{ST $K$-means} &  \xmark  &  \textbf{100} & \textbf{99} & 92 & \textbf{41} & \textbf{65}  &  \textbf{86} &  \textbf{45} & \textbf{51} &  - & -  & 82 & \textbf{26} & \textbf{63}  & \textbf{63} & \textbf{53}  & \textbf{61} &    - & - & 83 & \textbf{15}  & \textbf{46} & \textbf{63} & \textbf{20} & \textbf{38} \\ \midrule
AE + $K$-means   &   \cmark  & 72  & 60 & 66 & 13 & 41 & 48 & 37 & 23 &   - & - & 64 & 1 & 40 & 9 & 27 & 33 &   -& - & 59  & 0 & 28  & 6  & 26 & 15\\  \hline
DEC (MLP) ($\dagger$)  &  \cmark &  84  & 77 & 84 & 10 & 55 & 46 & 33 & 24&  -  & - &  83 & 1  & 51 & 12 & 20 &  32 &  -  & - & 80 & 0 &31  & 8 & 3 & 13\\\hline
DEC (Conv)  &  \cmark &  70  & 68 & 78  & 15 & 60  & 54 & 38 &  29 &    -& - &  74& 3 & 56 & 18 &  31 & 39 &   - & - &  69 & 1 & 37 & 13 & 17 & 16 \\\hline
VaDE (MLP) ($\ddagger$) &  \cmark &  68  & 65 & \textbf{94} & 11 & 20 & 50 & 36 & 26 &   - & - & \textbf{89} & 1 & 12 & 16 & 27   & 30 & - & - & \textbf{85} & 0  & 8 & 11 & 14 & 10 \\\hline
VaDE (Conv) &  \cmark &  65  & 59 & 81 & 14 &  58 & 55  & 40 & 46 &   - & - & 78 & 2 & 55 &20  & 35 & 53  &  - & - & 80  & 0  & 38  &  15 & 18 & 29 \\\bottomrule
\end{tabular}
\label{table:compare}
%\end{adjustbox}
\end{table*}

% For all the models, the number of clusters is set to the number of classes of the dataset. Note none of the model we compare with is using data augmentation. 
%
% Our approach shows to competes with the others methods and usually outperform them while not performing any embedding. 
%
Whereas the various deep learning approaches perform well on datasets for which their architectures were developed, e.g., MNIST and its derivatives: E-MNIST, Arabic Characters, they show limited performance on higher resolution datasets with a small number of samples, such as Rock-Paper-Scissors, Face-10 as well as the two toy examples. In fact, they are composed of only $700$ training data and $300$ testing data. In the following sections, we interpret various visualizations of the $K$-means variants used in this work. 
%Note that deep learning approaches are performing well on datasets for which their architecture has been developed, in the present case, MNIST and its derivatives: E-MNIST, Arabic Characters, while performing poorly on higher resolution datasets with a small number of samples, such as Rock-Paper-Scissors, Face-10 as well as the two toy examples (composed of only $700$ training data and $300$ testing data). 

\subsection{Interpretability: Centroids Visualization }

We propose in Fig.~\ref{fig:centroids} to visualize the centroids obtained via $K$-means, AI $K$-means, and our ST $K$-means. Supplementary visualizations are provided in Appendix~\ref{app:sup_centroids}.
For each dataset, the first row shows the clusters after initialization from $K$-means$++$.
The three following rows show the centroids obtained via the $K$-means, AI $K$-means, and ST $K$-means algorithms, respectively. 

We observe that, for all datasets, the $K$-means centroids are not lying on the data manifold as they are unrealistic images that could not occur naturally in the dataset. Besides, they appear to be blurry and hardly interpretable. These drawbacks are due to the update rule that consists in the average of the data belonging to each cluster in the pixel space.
%
%Across datasets, we observe that the Euclidean distance's update does not consider the manifold structure of the data as it corresponds to the average of the data belonging to each cluster. 
%
The AI $K$-means algorithm drastically reduces the centroids' blurriness induced by such an averaging as it considers the average of affinely transformed data. 
However, our ST $K$-means produces the crispest centroids and does not introduce any ambiguity in between the different clusters. 
In fact, the update of our method, Eq.~\ref{eq:update}, takes into account the non-linear structure of the manifold by taking the average over data transformed using a non-rigid transformation.

Interestingly, Fig.~\ref{fig:centroids} shows that even if at initialization multiple centroids assigned to the same class are attributed to different clusters, the ST $K$-means is able to recover this poor initialization thanks to its explicit manifold modeling and centroid averaging technique.
%the ST $K$-means modeling converge to a solution where each cluster corresponds to a different class, and is thus more suited for computer vision tasks. In fact, ST $K$-means proves able to discover centroids that best capture the variability in the data. 
%
%One can see in Fig.~\ref{fig:centroids} that even if the initalization is not correct, that is, multiple instances representing the same class are attributed to different clusters, the ST $K$-means corrects such mistake to discover centroids that best capture the variability in the data. 
%
For instance, in the Rock-Paper-Scissors dataset, although at initialization, two centroids correspond to the class paper, the ST $K$-means learns centroids of each of the three classes within this dataset. 
%
%For instance, in the case of the Rock-Paper-Scissors dataset, at the initialization, two centroids out of three are instances of the class paper, while the resulting ST $K$-means centroids are corresponding to the three classes within this dataset. 
%
In the Face-10 dataset, some centroids learned correspond to the rotation of the initialization; even in such extreme change of pose, the centroids remain crisp in most cases. 

\subsection{Interpretability: Embedding Visualization}

To get further insights into the disentangling capability of the ST $K$-means, we compare the $2$-dimensional projections of the data using t-SNE \cite{maaten2008visualizing}, of the $K$-means, AI $K$-means and ST $K$-means. Supplementary visualizations are provided in Appendix~\ref{app:sup_tsne}.

The t-SNE visualizations, for both the AI and ST $K$-means, are obtained by extracting the optimal transformation that led to the assignment. 
Precisely, for each image $x_i$, we compute $l = \argmin_k d(x_i, \mu_k)$  and extract the optimal parameter $\nu_{i,l}^{\star}$ which is then used to obtain the transformed image fed as the input of the t-SNE. 
%
%These transformation parameters are then used to obtain the transformed image that best fits its appropriate centroid, i.e., $ \mathcal{T}(x_i;\nu_{i,l}^{\star})$. 
%
%The t-SNE is then applied to all these transformed data and thus project the data observed through the similarity measure's prism. 

\vspace{-.25cm}
We can observe in Fig.~\ref{fig:tsne} that the affine transformations ease the data separation in this $2$-dimensional space. The ST $K$-means also drastically enhances the separability of the different clusters. When using ST $K$-means, the data are clustered based on macroscopically meaningful and interpretable parameters, making the model's performance possible to understand. 
For instance, for the Face-10 dataset, the t-SNE representation of the ST $K$-means clusters' shows that faces are grouped according to three significant orientations, left, right, and front. These three clusters are more easily observed in our ST $K$-means than in the affine invariant model. However, the $13$ different orientations present in the dataset remain too subtle to be captured by the ST $K$-means. 
%
%Interestingly, we can distinguish a fourth cluster that only contains faces of female subjects independent of their orientations. \randall{why is it interesting ? relate to euclidean distance or why kmean would not do that etc etc}
%
%It is nonetheless important to acknowledge that the performance of the ST $K$-means remains low compared to what supervised learning models would obtain, which are not competitors to our model. 
%
% For instance, for the Face-10 dataset, while the clustering accuracy is not high for any of the $K$-means methods, we observe in the t-SNE representation of the ST $K$-means case that the faces are grouped according to three significant orientations, left, right, and front, which indicates that the $13$ different orientations are too subtle to be characterized by the algorithm.
%
%
%Besides, on the upper right of the figure, we can see that there is a fourth cluster that contains the images of the only female belonging to the dataset.   

For the MNIST dataset, the last row and column of Fig.~\ref{fig:tsne}, we observe that most of the incorrectly clustered images are almost indistinguishable from samples of the cluster they have been attributed. 
In particular, we highlight this by proposing to zoom-in into the cluster of hand-written $9$ in Fig.~\ref{fig:tsne}. We can see that the yellow instances are samples from the class $4$ that have been transformed such that they resemble the $9$'s centroid in Fig.~\ref{fig:centroids}. 
% 
%
%
%In the zoomed part of the $9$'s cluster, we observe the instances of $4$ that were transformed such that they resemble $9$ samples. 
%
%
%For the cluster of $1$, we observe transformed sampled of $5$ that have been narrowed as to become instances of $1$.
%
% For the MNIST dataset, the last row of Fig.~\ref{fig:tsne}, we observe that most of the instances that are not correctly ``classified" look like samples of the other cluster. 
%
We also provide a zoom-in on one of the clusters obtained on the rock-paper-scissors dataset, first row and last column of Fig.~\ref{fig:tsne}. The incorrectly clustered data are the ones that, when transformed, easily fit the scissors shape. 

%For instance, the transformation required to transform a European $7$, that is, with a line in the middle, into a $7$ without a line can require a higher number of landmarks in the TPS method than the one that transform a $3$ into an $8$. 

\vspace{-.25cm}
\section{Conclusion}

Designing an unsupervised algorithm that is robust to non-rigid transformations remains challenging, despite the tremendous breakthrough in machine learning. The problem lies in appropriately limiting the size of the transformations. We showed that the spatial transformer could achieve this as the number of landmarks allows the learnability of a coarse to fine grid of transformation. However, such a parameter controlling the size of the transformation should be designed as well as be learned per-cluster or per-sample. Besides this difficulty, we showed that we could conserve the interpretability of the $K$-means algorithm applied in the input data space while drastically improving its performances. Such a framework should be favored in clustering applications where the explainability of the decision is critical.

\bibliography{example_paper}
\bibliographystyle{icml2021}

\appendix
\clearpage
\onecolumn

\section{Properties of ST K-means and Proofs}
\label{app:proofs}

\subsection{ST K-means Similarity Measure: a Quasipseudosemimetric}

\begin{prop}
The similarity measure defined by $\min_{\nu \in \mathbb{R}^{2\ell}} \| \mathcal{T}(x,\nu) - \mu \|$ is a Quasipseudosemimetric.
\end{prop}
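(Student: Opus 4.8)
The plan is to check the handful of axioms that define a quasipseudosemimetric — a nonnegative function $d$ on $\mathbb{R}^n\times\mathbb{R}^n$ with $d(x,x)=0$, but with symmetry, the triangle inequality, and the converse implication $d(x,\mu)=0\Rightarrow x=\mu$ all dropped — and then to record why each dropped property genuinely fails, which is what justifies the three prefixes. Throughout I write $d(x,\mu)=\min_{\nu\in\mathbb{R}^{2\ell}}\|\mathcal{T}(x,\nu)-\mu\|$ as in Eq.~\ref{eq:def_invariant}; the inner objective is bounded below by $0$ and is taken over a bounded set of resampled signals, so the value is finite.

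The first step, nonnegativity, is immediate: $d(x,\mu)$ is an infimum of the nonnegative quantities $\|\mathcal{T}(x,\nu)-\mu\|$, hence $d:\mathbb{R}^n\times\mathbb{R}^n\to\mathbb{R}_{\ge 0}$. The one substantive step is reflexivity, $d(x,x)=0$. For this I would use the ``identity'' landmarks $\nu_0$ — the landmarks at their rest positions on the regular grid, i.e.\ the configuration displayed as the identity transform in Fig.~\ref{fig:orig_to_diffeo} — for which the TPS is the identity map of the plane and the sampling grid coincides exactly with the pixel grid, so that $\mathcal{T}(x,\nu_0)=x$ with no interpolation error. Then $0\le d(x,x)\le\|\mathcal{T}(x,\nu_0)-x\|=0$. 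Nonnegativity together with $d(x,x)=0$ already yields the claimed structure.

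It then remains to argue that nothing stronger holds. \emph{Pseudo}: whenever $\mu=\mathcal{T}(x,\nu)$ for some $\nu$ one has $d(x,\mu)=0$ with $x\ne\mu$, which is precisely the deformation-invariance used in Sec.~\ref{sec:similarity}, so the identity of indiscernibles cannot hold. \emph{Quasi}: $\mathcal{T}$ acts only on its first argument, so $d(x,\mu)=\min_\nu\|\mathcal{T}(x,\nu)-\mu\|$ and $d(\mu,x)=\min_\nu\|\mathcal{T}(\mu,\nu)-x\|$ optimize over deformations of different signals and differ in general — I would make this concrete by taking $\mu$ to be an image the transformer can produce from $x$ but whose inverse warp it cannot realize, giving $d(x,\mu)=0\ne d(\mu,x)$. \emph{Semi}: the warp solving the inner problem for $d(x,y)$ and the one solving it for $d(y,z)$ cannot be composed (a TPS warp followed by bilinear resampling is not again of that form, and resampling can lose information), so the triangle inequality fails; e.g.\ with $z=\mathcal{T}(y,\nu)$ one gets $d(y,z)=0$ while there is no reason for $d(x,z)\le d(x,y)$.

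The main obstacle is not in verifying the two axioms that do hold — reflexivity only needs $\mathcal{T}(\cdot,\nu_0)$ to be the exact identity, which is clear from the resampling construction — but in being fully rigorous about the negative statements: strictly one should exhibit the counterexamples above (in particular the asymmetry one) rather than merely assert that symmetry and the triangle inequality do not follow. I expect the cleanest route is a small explicit construction on a low-resolution grid, which can be relegated to the appendix.
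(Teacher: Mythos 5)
Your proposal is correct and follows essentially the same route as the paper's proof: nonnegativity from the norm, reflexivity via the identity transform (equivalently, membership of $x$ in its own orbit), and then a justification of each prefix --- \emph{pseudo} because any $\mu$ in the $\mathcal{T}$-orbit of $x$ gives $d(x,\mu)=0$, \emph{quasi} because the transformer acts only on the first argument, and \emph{semi} because the triangle inequality can fail across distinct orbits. Your treatment is marginally more explicit about $d(x,x)=0$ and about wanting concrete counterexamples for the negative claims, whereas the paper settles for a qualitative orbit-based case analysis, but the substance is the same.
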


\begin{proof}
Let's first define the orbit of an image with respect to the TPS transformations. Note that, the TPS does not form a group as it is a piecewise mapping. However, we know that it approximate any diffeomorphism on $\mathbb{R}^2$. Therefore, for sake of simplicity, we will make a slight notation abuse by considering the orbit, equivariance, and others group specific properties as being induced by the spatial transformer $\mathcal{T}$.
\begin{defn}
\label{def:orbit_clust}
We define the orbit an image $x$ under the action the $\mathcal{T}$ by
\begin{equation}
    \mathcal{O}(x) = \left \{\mathcal{T}(x,\nu) | \nu \in \mathbb{R}^{2 \ell} \right \}.
\end{equation}
\end{defn}

Let's now consider each metric statement:
1) It is non-negative as per the use of a norm.

2) \textbf{Pseudo:} $\min_{\nu \in \mathbb{R}^{2\ell}} \left \| (\mathcal{T}(x,\nu) - \mu \right \|=0 \Leftrightarrow \exists \nu \in \mathbb{R}^{2\ell}, s.t. \;\; x = \mathcal{T}(x,\nu) \Leftrightarrow  x \sim_{\mathcal{T}} \mu$, that is, $x$ and $\mu$ are equivariant with respect to the transformations induced by $\mathcal{T}$. Thus, $d(x,\mu)=0$ for possibly distinct values $x$ and $\mu$, however, these are not distinct when we consider the data as any possible point on their orbit with respect to the group of diffeomorphism. In fact, the distance is equal to $0$ if and only if, $\mu$ and $x$ are equivariant. 

3) \textbf{Quasi:} The asymmetry of the distance is due to the non-volume preserving deformations considered. In fact, we do not consider the Haar measure of the associated diffeomorphism group and consider the $L_2$ distance with respect to the Lebesgue measure. Although the asymmetry of $d$ does not affect our algorithm or results, a symmetric metric can be built by normalizing the distance by the determinant of the Jacobian of the transformation. Such a normalization would make the metric volume-preserving and as a result make the distance symmetric. 

4) \textbf{Semi:} If $x,x',x'' \in \mathcal{O}$, then $d(x,x'')= d(x,x')= d(x',x'') = 0$ as it exist a $\nu, \nu', \nu''$ such that the TPS maps each data onto the other as per definition of the orbit, thus the triangular inequality holds. If $x,x'' \in \mathcal{O}$ and $x' \notin \mathcal{O}$, we have $d(x,x'')= 0 \leq d(x,x') + d(x',x'') $. If  $x,x' \in \mathcal{O}$ and $x
''\notin \mathcal{O}$, we have $d(x',x'') = d(x,x'')$, and since $0 \leq d(x,x')$, the inequality is respected. However, if $x,x',x''$ belong to three different orbits, then we do not have the  guarantee that then triangular inequality holds. In fact, it will depend on the distance between the orbits which is specific to each dataset. 
\end{proof}

\subsection{ST K-means Updates: Proof of Proposition~\ref{prop2}}
We consider the F\'echet mean of the centroid $k$ to be the solution of the following optimization problem, $\argmin_{\mu_k} \sum_{i: x_i \in C_k} d(x_i,\mu_k)$. Using our similarity measure, we obtain the following.
\begin{proof} 
\label{proof:prop2}
The Fr\'echet mean for the cluster $C_k$ is defined as $\argmin_{\mu_k} \sum_{i: x_i \in C_k} \left \|  \mathcal{T}(x_i,\nu^{\star}) - \mu_k  \right \|^{2}$
since the optimization problem is convex in $\mu_k$ (as the result of the composition of the identity map and a norm which are both convex) we have $\mu_k^{\star}: \nabla_{\mu} \sum_{i: x_i \in C_k} \left \| \mathcal{T}(x_i,\nu^{\star}) - \mu_k  \right \|^{2}= 0$.
%Let's first find the STfferential,
%\begin{align}
%    \sum_{i: s \in C_k} & \left \|  \mathcal{T}(g,s) - (\mu_k + h)  \right \|^{2} -\sum_{i: s \in C_k} \left \|  \mathcal{T}(g,s) - %\mu_k   \right \|^{2} \nonumber \\
%     & = 2   \sum_{i: s \in C_k} \langle \mu_k + \mathcal{T}(g,s), h \rangle + o(  \left \|h  \right \|)
%\end{align}
%since $\mu_k$ belongs to the Hilbert space $\mathbb{R}^n$, the Riesz representation theorem implies that, 
with, 
\begin{align}
    \nabla_{\mu}  \sum_{i: x_i \in C_k} \left \|  \mathcal{T}(x_i,\nu^{\star}) -  \mu_k  \right \|^{2}  & = 2 \: (\left | C_k \right | ) \times  \mu_k  + 2 \sum_{i: x_i \in C_k} \mathcal{T}(x_i,\nu^{\star}).
\end{align}
%Note that $\sum_{i: x_i \in C_k} \mathcal{T}(x_i,\nu^{\star}) \neq 0$ a.s.
\end{proof}

\subsection{ST K-means Similarity Measure: Invariance Property}
\label{app:invariance}
Motivated by the fact that small non-rigid transformations, usually, do not change nature of an image, we propose to exploit the invariance property of the similarity measure we proposed. 

In this section, for sake of simplicity we will assume that the transformations belong to the group of diffeomorphism. In practice, the TPS can only approximate element of such group, and the constraint we impose on the transformation, e.g., number of landmark, also limit the type of diffeomorphism that can be approximated, therefore, we could instead consider that we approximate a subgroup of the diffeomorphism group.

Let's define an invariant similarity measure under the action of such group. That is, the similarity between two $2$-dimensional signals remain the same under any diffeomorphic transformations. We propose to define the invariance in the framework of centroid-based clustering algorithm as follows.
\begin{defn}
\label{def:inv}
An invariant similarity measure with respect to $\text{diff}(\mathbb{R}^2)$ is defined as $d: \mathbb{R}^n \times \mathbb{R}^n \rightarrow \mathbb{R}^{+}$ such that for all images $x \in \mathbb{R}^n$, all centroids $\mu \in \mathbb{R}^2$, and all group elements $\forall g \in \text{diff}(\mathbb{R}^2)$, we have
\begin{equation}
\label{eq:inv}
d(x,\mu) = d( g \star x, \mu),
\end{equation}
where $g \star x$ denotes the action of the group element $g$ onto the image $x$.
\end{defn}

The similarity used in Eq.~\ref{eq:def_invariant} of the optimization problem is $\text{diff}(\mathbb{R}^2)$-invariant as per Definition~\ref{def:inv}.
\begin{prop}
The similarity $ \min_{g \in \text{diff}(\mathbb{R}^2)} \left \| g \star x - \mu \right \|$ is $\text{diff}(\mathbb{R}^2)$-invariant.
\label{prop1}
\end{prop}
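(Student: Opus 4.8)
The plan is to exploit the single fact that makes invariance work for any orbit-minimised distance, namely that $\text{diff}(\mathbb{R}^2)$ is a \emph{group}, so that translating the base point of an orbit by a group element merely re-parametrises the same orbit. Concretely, fix an arbitrary $h \in \text{diff}(\mathbb{R}^2)$; I want to show $d(h \star x, \mu) = d(x,\mu)$, where $d(x,\mu) = \min_{g \in \text{diff}(\mathbb{R}^2)} \| g \star x - \mu\|$.

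First I would invoke the defining property of a group action, associativity/compatibility, to write $g \star (h \star x) = (g \circ h) \star x$ for every $g \in \text{diff}(\mathbb{R}^2)$. Next I would observe that right multiplication by $h$, i.e.\ the map $g \mapsto g \circ h$, is a bijection of $\text{diff}(\mathbb{R}^2)$ onto itself (its inverse is right multiplication by $h^{-1}$, which exists precisely because $h$ is a diffeomorphism). Consequently the indexing set of the minimisation is unchanged: as $g$ ranges over the whole group, so does $g' := g \circ h$. This yields the chain
\begin{equation}
d(h \star x, \mu) = \min_{g \in \text{diff}(\mathbb{R}^2)} \| g \star (h \star x) - \mu\| = \min_{g \in \text{diff}(\mathbb{R}^2)} \| (g \circ h) \star x - \mu\| = \min_{g' \in \text{diff}(\mathbb{R}^2)} \| g' \star x - \mu\| = d(x,\mu),
\end{equation}
which is exactly Eq.~\ref{eq:inv}. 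Equivalently, one can phrase the whole argument at the level of orbits: $\mathcal{O}(h\star x) = \mathcal{O}(x)$ since both equal $\{g \star x : g \in \text{diff}(\mathbb{R}^2)\}$, and $d(\cdot,\mu)$ depends on its first argument only through its orbit.

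The step that deserves the most care is not the algebra above but the justification that we may treat the relevant transformations as forming a group: the spatial transformer built from the TPS is a piecewise interpolant and is not literally closed under composition or inversion. I would therefore state the proposition for the idealised family $\text{diff}(\mathbb{R}^2)$ (consistent with the framing already adopted in Appendix~\ref{app:invariance}), note that in practice $\mathcal{T}$ only \emph{approximates} elements of this group — so invariance holds up to the approximation error and the capacity limit imposed by the finite number of landmarks $\ell$ — and remark that closure under inversion in particular is what guarantees the bijection $g \mapsto g\circ h$, hence the equality of the two minima. No estimates are needed; the content is entirely the group-closure observation, and the honest caveat is simply that $\mathcal{T}$ realises a subset, not a subgroup, of $\text{diff}(\mathbb{R}^2)$.
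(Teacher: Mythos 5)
Your proposal is correct and follows essentially the same route as the paper: both proofs rest on the group structure of $\text{diff}(\mathbb{R}^2)$ (closure under composition and existence of inverses) to show that the minimisation over the orbit of $h \star x$ coincides with that over the orbit of $x$; the paper phrases this by exhibiting $g^{\star} \cdot g'^{-1}$ as a minimiser for the transformed datum, while you re-index the whole minimisation via the bijection $g \mapsto g \circ h$. Your formulation is marginally tidier (it gives both inequalities at once, whereas the paper's argmin manipulation strictly speaking only exhibits one direction), and your caveat that the TPS family is only a subset, not a subgroup, of $\text{diff}(\mathbb{R}^2)$ matches the paper's own disclaimer in Appendix~\ref{app:invariance}.
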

\begin{proof} 
\label{proof:prop}

Let consider $g^{\star} = \argmin_{g \in \text{diff}(\mathbb{R}^2)} \left \| g \star x - \mu \right \|$, we have $\argmin_{g \in \text{diff}(\mathbb{R}^2)} \left \| g \cdot g' \star x - \mu \right \| = g^{\star} \cdot g'^{-1}$, where $g'^{-1}$ is the inverse group element of $g'$. In fact, $\left \| g^{\star} \cdot g'^{-1} \cdot g' \star x - \mu \right \| = \left \| g^{\star} \star x - \mu \right \|.$ Since for all $g' \in \text{diff}(\mathbb{R}^2)$, it exists an inverse element $g'^{-1}$, we have that $\forall g \in \text{diff}(\mathbb{R}^2)$, $d( g' \star x, \mu) = d(x,\mu)$.

That is, by definition of the group, there is always another element that minimizes the loss function by using the composition between the inverse element of the group that has just been added, $g'$, and the optimal element $g^{\star}$.
\end{proof}

\subsection{ST $K$-means Convergence: Proof of Proposition~\ref{prop:convergence}}

\begin{proof}
Following the notation of Sec.~\ref{app:TPS}, we can define the spatial transformer operator $\mathcal{T}$ as the composition of the TPS and bilinear interpolation map. That is, $\mathcal{T}(x,\nu)=\Gamma[F(\nu),x]$. Now the aim is to prove that $\min_{\nu \in \mathcal{R}^{2l}} \left \| \mathcal{T}(x,\nu) - \mu \right \|_2^2$ defines a Bregman divergence measure as in \cite{banerjee2005clustering}. In such a case, Algo.~\ref{algo:algo} defines a special case of the Bregman divergence hard-clustering algorithm again defined in \cite{banerjee2005clustering} which is proven to converge. 

Let's first start by making an assumption on the data $x$, we can without loss of generality assume that they are non-negative as we are dealing either with images or time-frequency representation where a modulus is applied to obtain the $2$-dimensional real representation. Then, we also assume that the minimum over the transformation parameters $\nu$ reaches a global unique minimum, denoted by $\nu^{\star}$.
Now,
\begin{align*}
    \left \| \mathcal{T}(x,\nu^{\star}) - \mu \right \|_2^2 &= \langle  \mathcal{T}(x,\nu^{\star}), \mathcal{T}(x,\nu) \rangle + \langle \mu, \mu \rangle - 2 \langle \mathcal{T}(x,\nu^{\star}), \mu \rangle \nonumber \\
    & = \langle \mathcal{T}(x,\nu^{\star}),\mathcal{T}(x,\nu^{\star}) \rangle - \langle \mu, \mu \rangle - \langle \mathcal{T}(x,\nu^{\star}) - \mu, 2 \mu \rangle~~,
\end{align*}
Now it is clear that $\mu =  \mathcal{T}(\mu,0)$ which consists in the identity transform of the centroid $\mu$. Then we denote by $\phi_{\nu^{\star}}(y) = \langle \mathcal{T}(y,\nu^{\star}), \mathcal{T}(y,\nu^{\star}) \rangle$ where $y \in \mathbb{R}^n$, and obtain that,
\begin{equation*}
    \left \| \mathcal{T}(x,\nu^{\star}) - \mu \right \|_2^2 = \phi_{\nu^{\star}}(x) - \phi_{0}(\mu) - \langle \mathcal{T}(x,\nu^{\star}) - \mathcal{T}(\mu,0), \nabla \phi_{0}(\mu) \rangle~~. 
\end{equation*}
Now, we know that $\langle x,x \rangle $ is non-decreasing w.r.t each dimension since the image or time-frequency representation are positive real valued, and the inner product defines a strictly convex map. Then, we also know that $\mathcal{T}(x,\nu^{\star})=\Gamma[F(\nu^{\star}),x]$ is defined as the composition of the TPS for the coordinate and the bilinear map for the image, which can be formulated as a linear transformation with respect to the data $x: Ax$, where $A$ is a structured sparse matrix where each block denotes the dependency to nearby pixels. Therefore this mapping is convex. As a composition between a non-decreasing w.r.t each dimension and strictly convex function with a convex function, $\phi_{\nu^{\star}}$ is strictly convex, which complete the proof. 

\end{proof}

\subsection{ST $\boldsymbol{K}$-means: Weighted Voronoi Diagram}
\label{app:metric}

\begin{proof}
\label{proof:voronoi}
Let's start by re-writting the similarity measure as to analytically express a metric tensor that would be the weight in the weighted Voronoi diagram the ST $K$-means defines. Using App.~\ref{app:TPS}, we can re-write $d(x,\mu_k) = \min_{\nu \in \mathbb{R}^{2 \ell}} \left \| \mathcal{T}(x,\nu) - \mu_k \right \|_2^2 = \min_{\nu \in \mathbb{R}^{2\ell}} \left \| A(\nu) x - \mu_k \right \|_{2}^2$, where $A(\nu)$ is bilinear in the coordinates that are induced by the TPS. In this formulation we can observe that $\nu$ defines the displacement vector w.r.t the original uniform grid of landmark. That is, if $\nu$ is the null vector, then $A(\nu)x=x$. Now, we assume that such linear operator is inversible, i.e., the TPS transformation is invertible (note that this is not always the case \cite{johnson2001landmark}). Then, we can re-write $d(x,\mu_k)$ as
\begin{equation*}
    \min_{\nu \in \mathbb{R}^{2 \ell}} \left \| x - A(\nu_{x,k})^{-1} \mu_k \right \|_{A(\nu_{x,k})^TA(\nu_{x,k})}^2,
\end{equation*}
where $\left \| x \right \|_{A(\nu_{x,k})^T A(\nu_{x,k})} = x^T A(\nu_{x,k})^T A(\nu_{x,k}) x$, and $A(\nu_{x,k})^T A(\nu_{x,k})$ defines the metric tensor, and the notation $\nu_{x,k}$ indicates that the displacement vector $\nu$ depends on the centroid $\mu_k$ and the datum $x$. Also, note that while $A(\nu_{x,k})$ defines the transformation operator to map $x$ onto $\mu_k$, $A(\nu_{x,k})^{-1}$ is the inverse operator mapping the centroid $\mu_k$ to the datum $x$.

Now the tuple of cells $\left \{R_k \right \}_{k=1}^{K}$ defines such as
\begin{equation*}
    R_k = \left \{ x \in \mathbb{R}^n | \left \|x - A(\nu_{x,k})^{-1} \mu_k \right \|_{A(\nu_{x,k})^TA(\nu_{x,k})} \leq  \left \|x - A(\nu_{x,j})^{-1} \mu_j \right \|_{A(\nu_{x,j})^TA(\nu_{x,j})},~~ \forall j \neq k \right \}~~, 
\end{equation*}
defines a weighted Voronoi diagram \cite{letscher2007vector,inaba1994applications}, where we observe that the metric tensor is dependant on all the spatial transformations. 
\end{proof}

\section{Implementation Details}
\label{app:impl_details}
Note that the deformation invariant similarity measure we introduced in Eq.~\ref{eq:def_invariant} differs from the affine invariant distances developed in \cite{fitzgibbon2002affine,simard2012transformation,lim2004image}. All previously defined measures of error rely on the assumption that the manifold can be locally linearized and as a result the tangent space is used as a proxy to learn the optimal affine transformation. 
However, the work of \cite{wakin2005multiscale} suggests that tangent planes fitted to image manifold continually twist off into new dimensions as the parameters of the affine transformations vary due to a possible the intrinsic multiscale structure of the manifold. As such, the alignment of two images can be done by linearizing the manifold. To do so, \cite{wakin2005multiscale} propose to consider the multiscale structure of the manifold, we simplify their approach by applying a low-pass filter on the images and the centroid prior to learn the affine transformation best aligning them. Then, we optimize the remaining part of the TPS to account for diffeomorphic transformations. These two steps are similar to the one used in \cite{jaderberg2015spatial}.

\section{Alternative Methods}
\label{app:compete}
We compare our model with well-known clustering techniques using deep neural networks. We performed experiments for the VaDE \cite{jiang2016variational} and DEC \cite{xie2016unsupervised} using the code made publicly available by the authors. 
%state-of-the-art
We use the annotation (MLP) as a reference to the MLP architecture used in their experiments (see Appendix~\ref{app:arch} for details). 
To fairly compare our model to the DEC and VaDE models, we proposed a convolutional architecture to the DEC and VaDE networks, denoted by DEC (Conv) and VaDE (Conv) (see Appendix~\ref{app:arch} for details). 
Finally, we evaluate the performance of an augmented $K$-means algorithm trained using the features extracted by an Autoencoder, denoted by AE $+$ $K$-means in the following. 
%(Conv)

The parameters of the different models mentioned above are learned by stochastic gradient descent (Adam optimizer \cite{kingma2014adam}). In all the experiments, the learning rate are cross-validated following the approach in \cite{xie2016unsupervised} according to $[10^{-4}, 5\times10^{-4}, 10^{-3}, 5\times 10^{-3}, 10^{-2},5\times 10^{-2} ]$. The internal parameters that are model dependent, e.g., the number of pre-training epoch and the update intervals, are also cross-validated.  
% As in our method, each model is initialized $10$ times and the best result is reported. 

We also compare our ST $K$-means to the closely related $K$-means and affine invariant $K$-means, denoted by AI $K$-means.%
For each run, all three $K$-means algorithms start from the same initial centroids using the $K$-means$++$ algorithm developed by \citet{arthur2006k} to speed up the convergence of the $K$-means algorithm.

\section{Neural Network Architectures}
\label{app:arch}

For both architectures , the decoder architecture is symmetric to the encoder and the batch size is set to $64$. 
\paragraph{MLP:} The MLP architecture from input data to bottleneck hidden layer is composed of $4$ fully connected ReLU layers with dimensions $\left [500,500,2000,10 \right ]$.

\paragraph{Conv:} The CONV architecture is composed of $3$ $2d$-convolutional ReLU layers with $32$ filters of size $5\times5$, and $2$ fully connected ReLU layers with dimension $\left [400, 10 \right ]$. For each layer, a batch normalization is applied.

\section{Thin-Plate-Spline Interpolation}
\label{app:TPS}
Let's consider two set of landmarks, the source ones $\nu_s = \{u_i,v_i\}_{i=1}^\ell$ and the transformed $\nu_t = \{u_i',v_i'\}_{i=1}^\ell$ where $\ell$ denotes the number of landmarks. The TPS aim at finding a mapping $F=(F_1,F_2)$, such that $F(u,v) = (F_1(u,v),F_2(u,v)) = (u',v')$, that is, the mapping between two set of landmarks. The particularity of the TPS is that it learns such a mapping by minimizing the interpolation term, and a regularization that consists in penalizing the bending energy. 

The TPS optimization problem is defined by
\begin{equation}
    \min_{F} \sum_{i=1}^{N} \left \|(u_i',v_i')-F(u_i,v_i)  \right \|^2 + \lambda \int \int \left [ (\frac{\partial ^2 F}{\partial u^2})^2 + 2 (\frac{\partial^2 F}{\partial u \partial v})^2 + (\frac{\partial ^2 F}{\partial v^2})^2  \right ] du dv.
\label{eq:bend}
\end{equation}

In our model, the source landmarks are considered to be the coordinates of a uniform grid. Also note that both the source landmarks and transformed ones are usually a subset of the set of coordinate of the images. For instance, for the MNIST dataset of size $28\times28$, the landmarks would be a grid of size $\ell \times \ell$, where $\ell <28$. While the mapping is based on the landmark, it is then applied to the entire image coordinate. In fact, $F=(F_1,F_2)$ is mapping $\mathbb{R}^2 \rightarrow \mathbb{R}^2$, where  $F_1$ (resp. $F_2$) corresponds to the mapping from $(x,y)$ to the first dimension $x'$ (resp. the second dimension $y'$).

The solution of the TPS optimization problem, Eq.~\ref{eq:bend}, provides the following analytical formula for $F$
\begin{align}
\label{eq:tpsx}
     F_1(u,v) = \hspace{-.1cm} u'   \hspace{-.1cm} =   \hspace{-.1cm} a_1^{(1)} &  \hspace{-.1cm} + \hspace{-.1cm} a_{u}^{(1)}u \hspace{-.1cm}  + \hspace{-.1cm}  a_{v}^{(1)} v \hspace{-.1cm}  + \hspace{-.1cm} \sum_{i=1}^{\ell} \hspace{-.1cm}  w_i^{(u)}   U(\left | (u_i,v_i)\hspace{-.1cm}  -\hspace{-.1cm}  (u,v) \right |),
\end{align}
\begin{align}
\label{eq:tpsy}
     F_2(u,v) = \hspace{-.1cm} v'  \hspace{-.1cm} =   \hspace{-.1cm} a_1^{(2)} & \hspace{-.1cm}  + \hspace{-.1cm} a_{u}^{(2)}u \hspace{-.1cm} + \hspace{-.1cm} a_{v}^{(2)} v \hspace{-.1cm}  +\hspace{-.1cm}  \sum_{i=1}^{\ell} \hspace{-.1cm}  w_i^{(v)}  U(\left | (u_i,v_i) \hspace{-.1cm} - \hspace{-.1cm} (u,v) \right |),
\end{align}
where $\left | . \right |$ is the $L_1$-norm, $a_1,a_{u},a_{v}$ are the parameters governing the affine transformation, and $w_i$ are parameters responsible for non-rigid transformations as they stand as a weight of the non-linear kernel $U$. The non-linear kernel $U$ is expressed by $U(r) = r^2 \log(r^2), \forall r \in \mathbb{R}_{+}$.

Based on the landmarks $\nu_s$ and $\nu_t$, we can obtain these parameters by solving a simple system of equation define by the following operations

\begin{equation}
\label{eq:solving}
    \mathcal{L}^{-1}\mathcal{V} = \begin{bmatrix}
(W^{(x)}|a_1^{(x)} a_x^{(x)} a_y^{(x)})^T\\ 
(W^{(x)}|a_1^{(y)} a_x^{(y)} a_y^{(y)})^T
\end{bmatrix}.
\end{equation}
where the matrix $\mathcal{L} \in \mathbb{R}^{(\ell+3) \times (\ell+3)}$, is defined as
\[ \mathcal{L} =
\left[
\begin{array}{c|c}
\mathcal{K} & \mathcal{P} \\
\hline
\mathcal{P}^T & \mathcal{O}
\end{array}
\right],
\mathcal{K} = \begin{bmatrix}
 0  & U(r_{12}) & \dots & U(r_{1\ell}) \\ 
 U(r_{21}) & 0 & \dots & U(r_{2\ell}) \\ 
 \dots & \dots & \dots & \dots \\ 
 U(r_{\ell1}) & \dots & \dots & 0 
\end{bmatrix}, 
    \mathcal{P} = \begin{bmatrix}
 1  & x_1  & y_1 \\ 
 1 &  x_2 & y_2 \\ 
 \dots & \dots & \dots  \\ 
 1 & x_{\ell} & y_{\ell} 
\end{bmatrix} \]
where $r_{ij}= \left | (u_i,v_i) - (u_j,v_j) \right |$, $\mathcal{K} \in \mathbb{R}_{+}^{\ell \times \ell}$,
and $\mathcal{V}=\begin{bmatrix}
x_1' & x_2' & \dots & x_\ell'| 0 &0 &0\\ 
y_1' & y_2' & \dots & y_\ell'| 0 &0 &0
\end{bmatrix}.$

Note that, since the matrix $\mathcal{L}$ depends only on the source landmarks, and that in our case these are unchanged, its inverse can be computed only once. The only operation required to be computed for each data and each centroid is the matrix multiplication $\mathcal{L}^{-1} \mathcal{V}$ providing the parameters of the TPS transformation, as per Eq.~\ref{eq:tpsx}, \ref{eq:tpsy}. Given these parameters, the mapping $F$ can be applied to to each coordinate of the image.

 Now in order to render the image, one can perform bilinear interpolation as it is achieved in. Besides, the bilinear interpolation will allow the propagation of the gradient through any differentiable loss function.

Given an image $x_1 \in \mathbb{R}^{n}$  where $n=W \times H$, $W$ denotes the width and $H$ the height of the image, and two sets of landmarks $\nu_s = \{u_i,v_i\}_{i=1}^\ell$ ,uniform grid coordinate of $x_1$, and $\nu_t = \{u_i',v_i'\}_{i=1}^\ell$, the transformation of the uniform grid, which are a subset of the image coordinate,
We are able to learn a mapping $F=(F_1,F_2)$ such that for each original pixel coordinate, we have their transformed coordinates. In fact, given any position $(u,v)$ on the original image, the mapping $F$ provides the new positions $(u',v')$ as per Eq.~\ref{eq:tpsx}, Eq.~\ref{eq:tpsy}.

Now, from this transformed the coordinates space, we can render an image $x_2 \in \mathbb{R}^{n}$ using, as in \cite{jaderberg2015spatial}, the bilinear interpolation function $\Gamma: \mathbb{R}^2 \times \mathbb{R}^n \rightarrow \mathbb{R}$ which takes as input the original image $x_1$ and the transformed pixel coordinates $(u',v')$, and outputs the pixel value of the transformed image at a given pixel coordinate 
 \begin{equation*}
 \begin{aligned}
     x_2  (k,l)= & \Gamma[F(u_k,v_l),x_1] \nonumber \\
      = & \Gamma[(u_k',v_l'),x_1] \nonumber \\
      = & \sum_{t,h \in \{0,1\}}\sum_{i=1}^{W} \sum_{j=1}^{H} x_1(i,j) \delta(\floor{u_k'+t}-i) 
       \times \delta(\floor{v_l'+h}-j)(u_k'-\floor{u_k'})^{\delta(t)}(v_l'-\floor{v_l'})^{\delta(h)} \nonumber \\
       & \hspace{1cm} \times (1-(v_l'-\floor{v_l'}))^{\delta(t-1)}(1-(u_k'-\floor{u_k'}))^{\delta(h-1)}, 
     \end{aligned}
 \end{equation*}
 where $\delta$ is the Kronecker delta function and $\floor{.}$ is the floor function rounding the real coordinate to the closest pixel coordinate. 
 
%\begin{figure}[!h]

\section{Datasets}
\label{ap:data}
\noindent
\textbf{MNIST \cite{deng2012mnist}:} is a handwritten digit dataset containing $60.000$ training and  $10.000$ test images of dimension $28 \times 28$ representing $10$ classes. 
\\
\noindent
\textbf{Rigid MNIST:} we randomly sample one instance of each MNIST class and generate $100$ random affine transformations for each sample.  A third of the data are used for testing.
\\
\noindent
\textbf{Non-rigid MNIST:} we randomly sample one instance of each MNIST class and generate $100$ random affine transformations for each sample as well as random transformations using the TPS method. A third of the data are used for testing.
\\
\noindent
\textbf{Audio MNIST \cite{becker2018interpreting}:} is composed of $30000$ recordings of spoken digits by  $60$  different speaker and sampled at $48kHz$ of $1$sec long. It consists of $10$ classes. We use $10000$ data for testing and $20000$ for testing. This dataset will be transformed into a time-frequency representation. This representations, can be considered as images, are usually used as the common representation of audio recordings \cite{cosentino2020}. 
\\
\noindent
\textbf{E-MNIST \cite{cohen2017emnist}:} is a handwritten letters dataset merging a balanced set of the uppercase and lowercase letters into a single $26$ classes dataset of dimension $28 \times 28$.
\\
\noindent
\textbf{Rock-Paper-Scissors \cite{rps}:} images of hands playing rock, paper, scissor game, that is, a $3$ classes dataset. The dimension of each image is $300 \times 300$. The training set is composed of $2520$ data and the testing set $372$.
\\
\noindent
\textbf{Face-10 \cite{gourier2004estimating}:} images of the face of 15 people, wearing glasses or not and having various skin color. For each individual, different samples are obtained with different pose orientation varying from $-90$ degrees to $+90$ degrees vertical degrees. The dimension of each image is $288 \times 384$. The training set is composed of $273$ data and testing set of $117$ data.
\\
\noindent
\textbf{Arabic Char \cite{altwaijry2020arabic}:} Handwritten Arabic characters written by $60$ participants. The dataset is composed of $13,440$ images in the training set and $3360$ in the test set. The dimension of each image is $32 \times 32$.

\section{Supplementary Visualization}

\subsection{Additional t-SNE Visualisations}
\label{app:sup_tsne}

\begin{figure*}[h!]

\begin{minipage}{.32\linewidth}
    \centering
\textbf{\textit{Raw Data}}
\end{minipage}
\begin{minipage}{.32\linewidth}
    \centering
\textbf{\textit{Affine Invariant}}
\end{minipage}
\begin{minipage}{.32\linewidth}
    \centering
\textbf{\textit{Spatial Transformer}}
\end{minipage}

\begin{minipage}{.01\linewidth}
\rotatebox{90}{\textbf{\textit{E-MNIST}}}
\end{minipage}
\begin{minipage}{.32\linewidth}
    \centering
    \includegraphics[width=\linewidth]{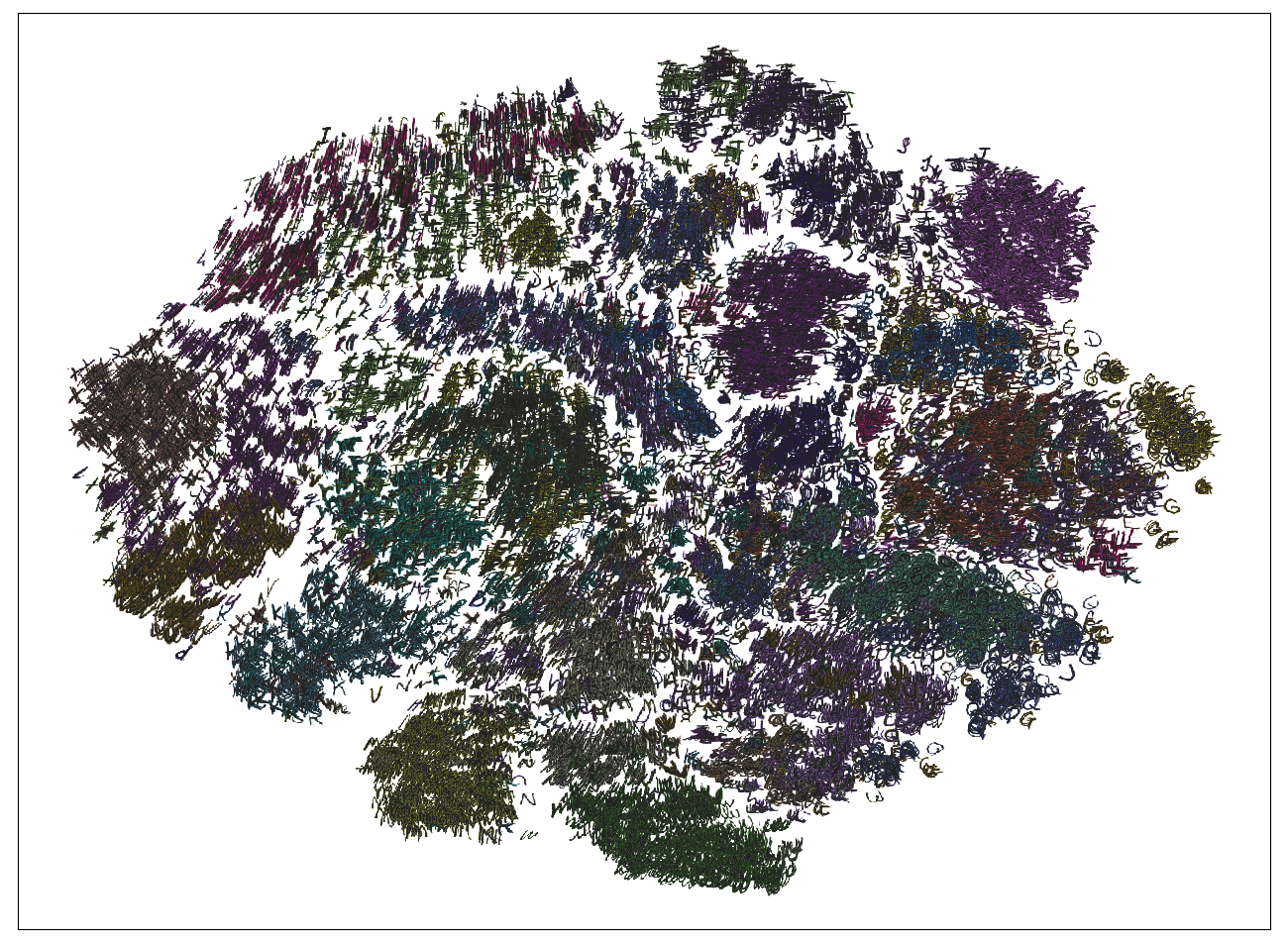}
\end{minipage}
\begin{minipage}{.32\linewidth}
    \centering
    \includegraphics[width=\linewidth]{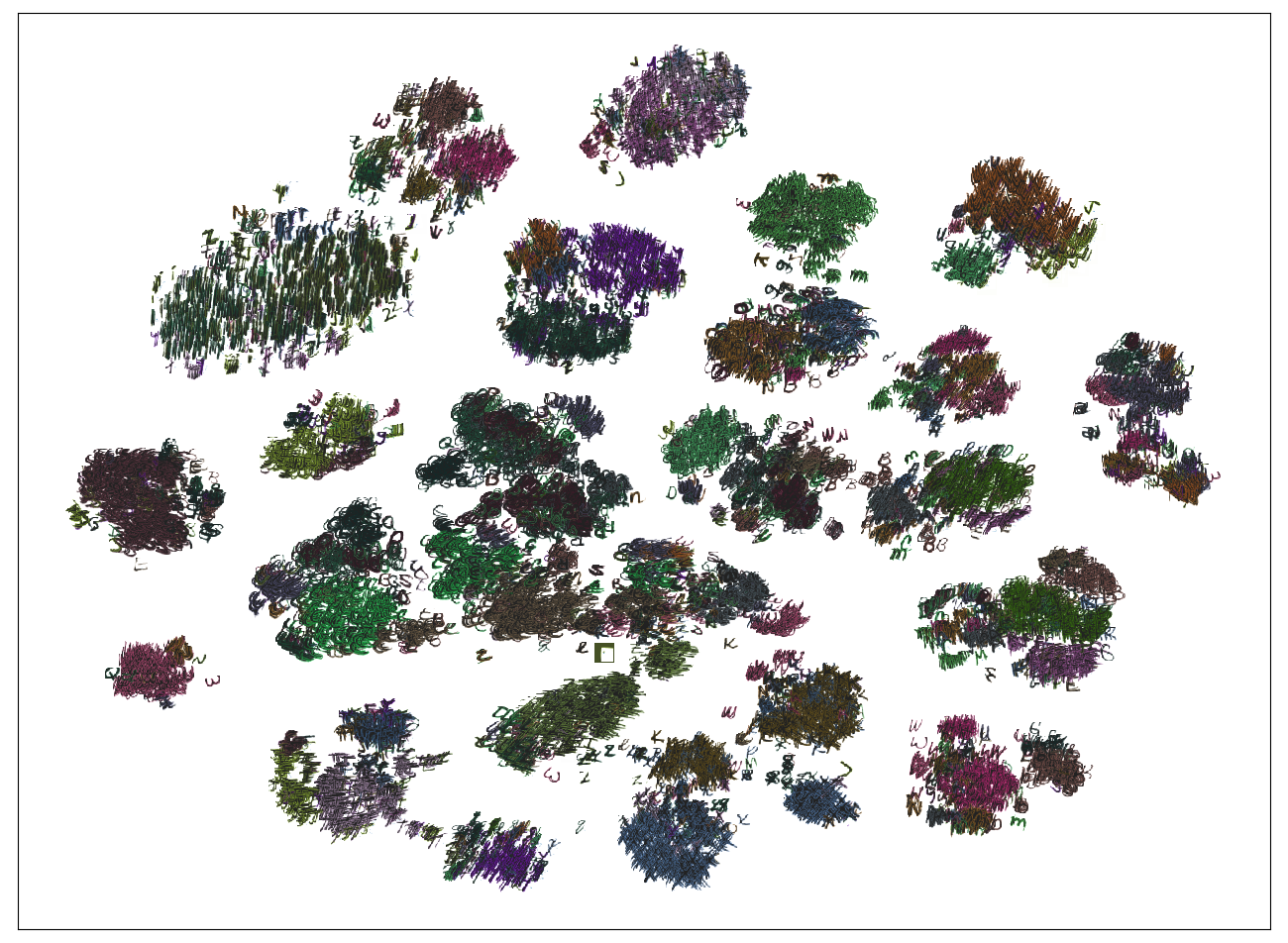}
\end{minipage}
\begin{minipage}{.32\linewidth}
    \centering
    \includegraphics[width=\linewidth]{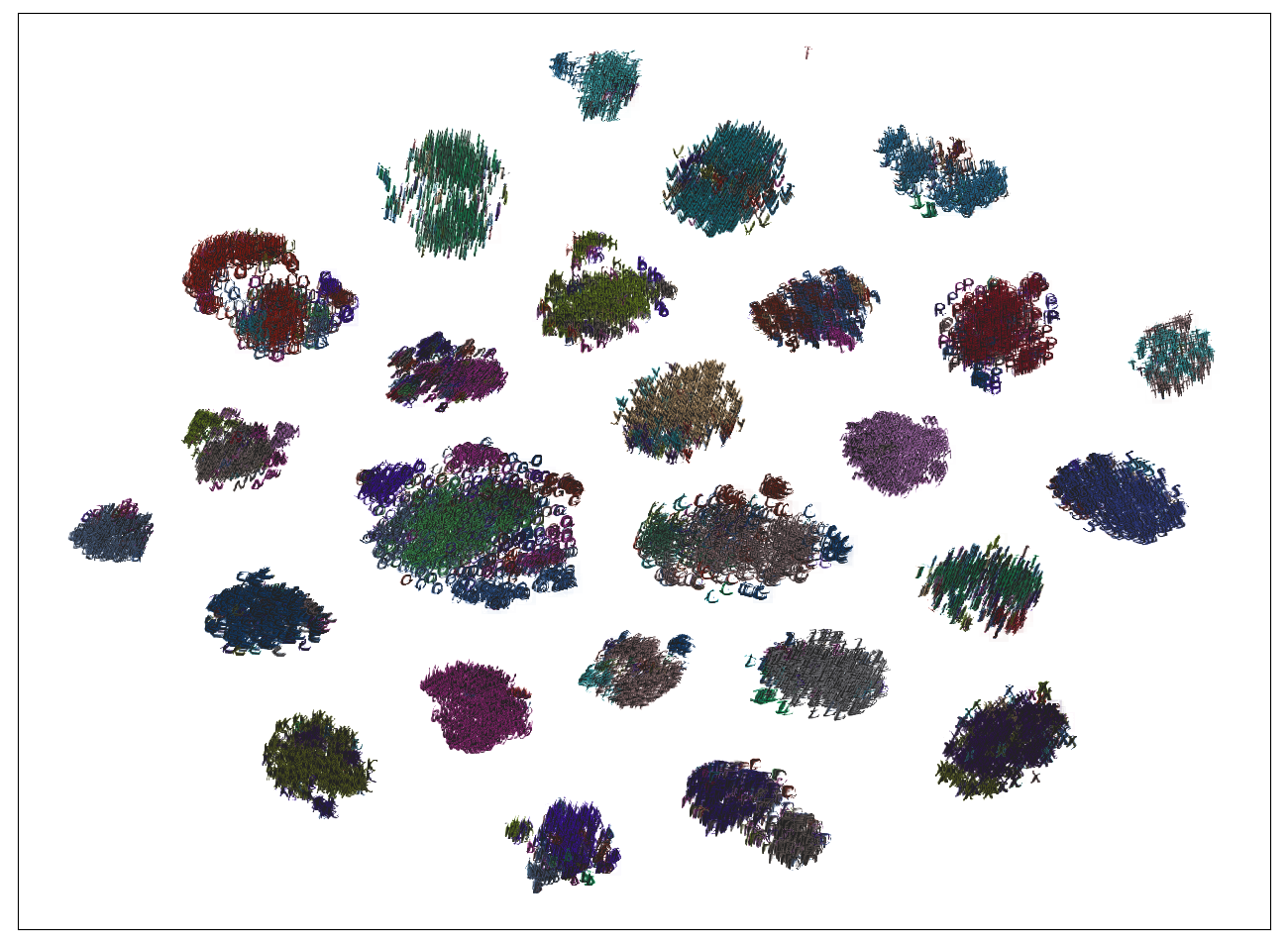}
\end{minipage}

\begin{minipage}{.01\linewidth}
\rotatebox{90}{\textbf{\textit{Arabic Characters}}}
\end{minipage}
\begin{minipage}{.32\linewidth}
    \centering
    \includegraphics[width=\linewidth]{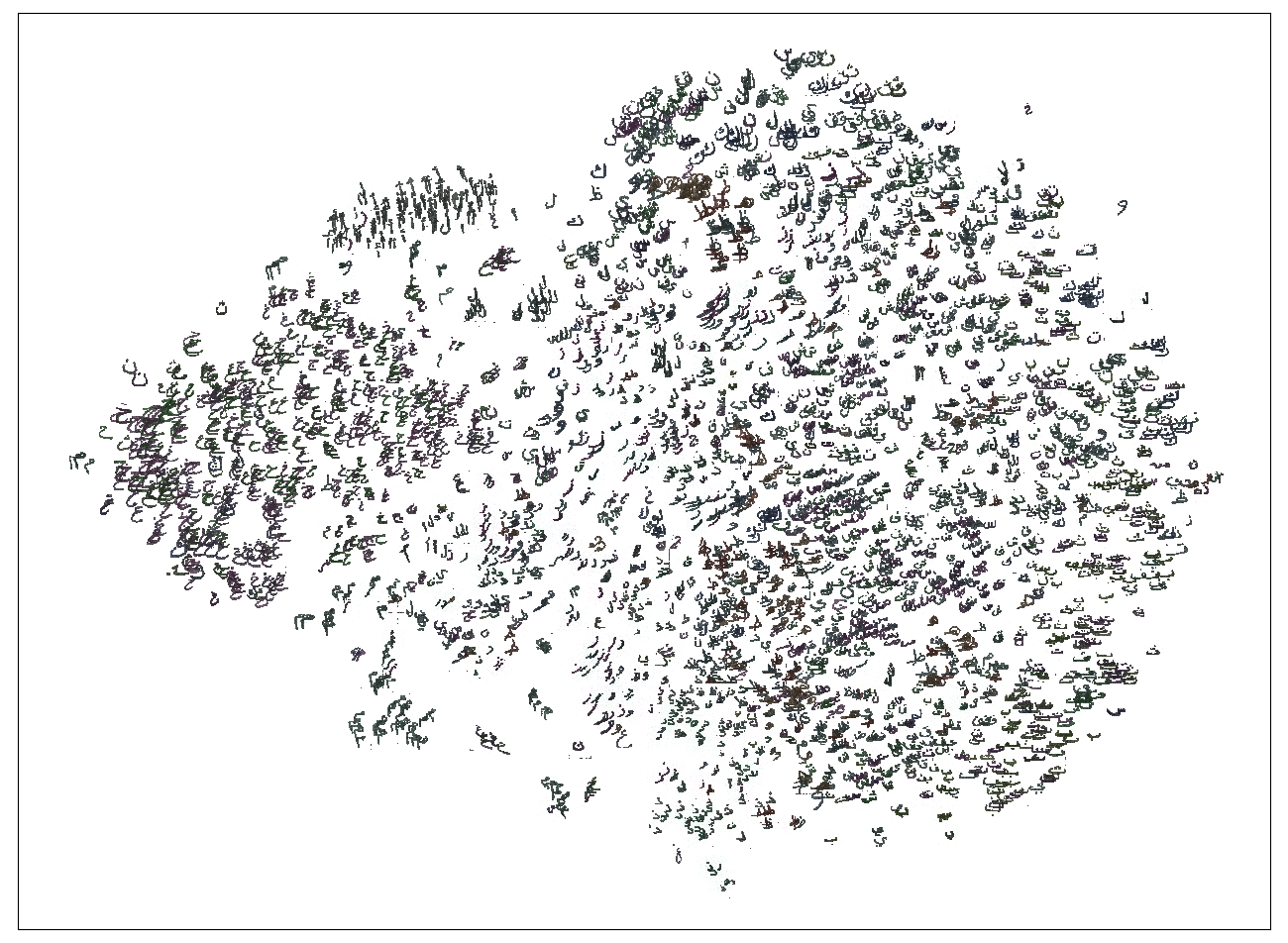}
\end{minipage}
\begin{minipage}{.32\linewidth}
    \centering
    \includegraphics[width=\linewidth]{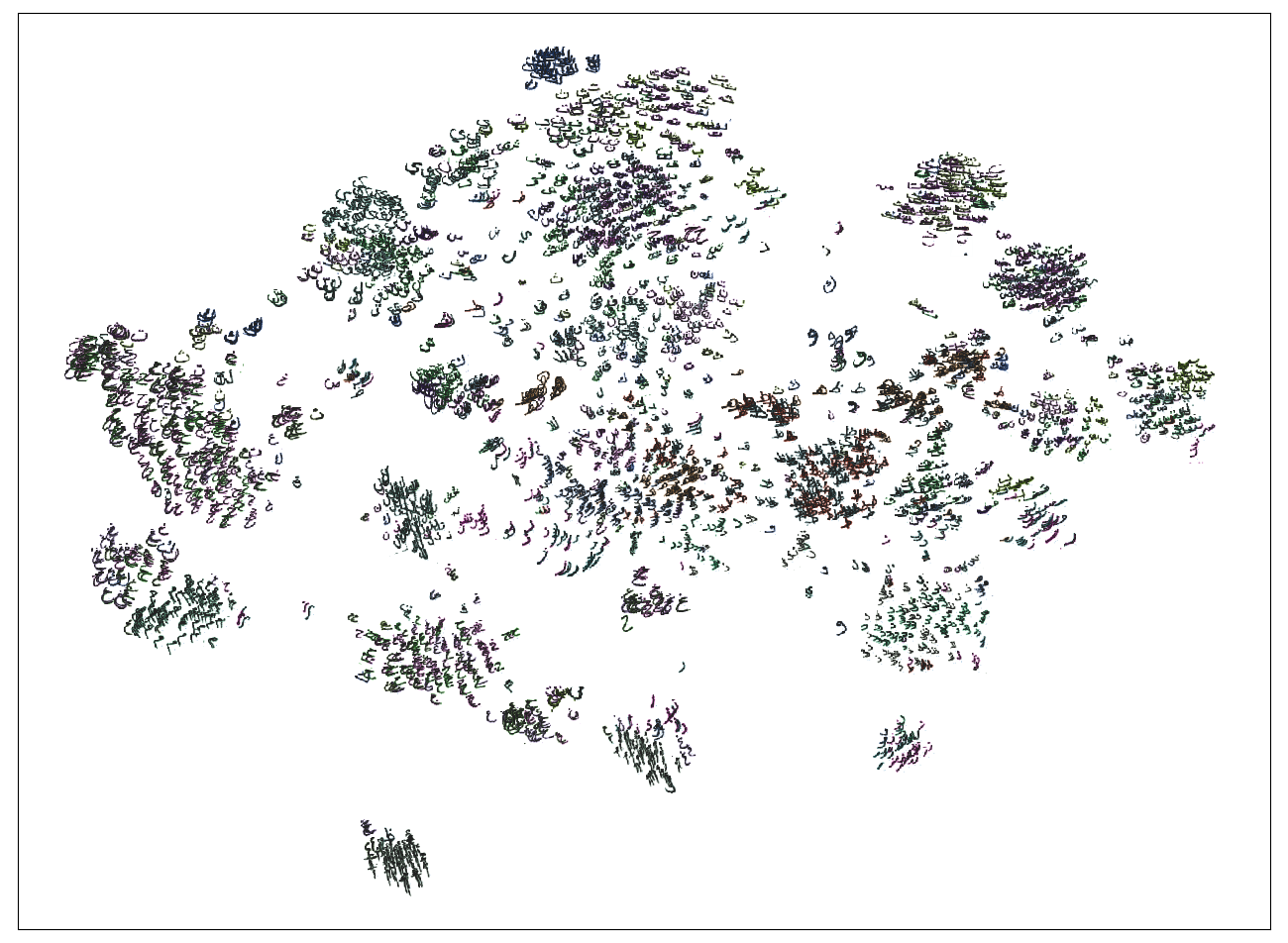}
\end{minipage}
\begin{minipage}{.32\linewidth}
    \centering
    \includegraphics[width=\linewidth]{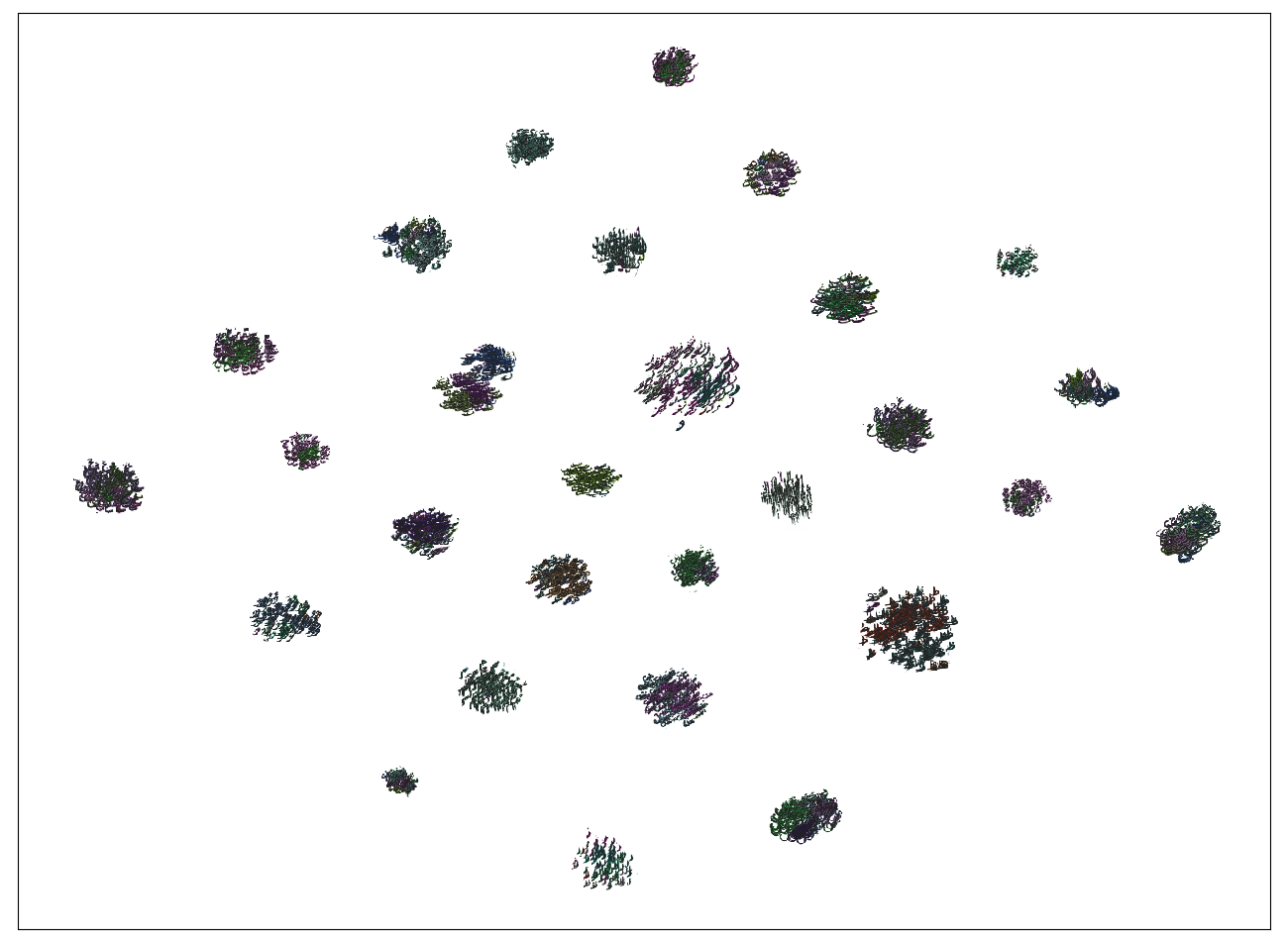}
\end{minipage}

\caption{\textbf{$2$-dimensional t-SNE Vizualisation} - The raw data (\textit{left column}), the affinely transformed data, i.e., we extract the transformation of the data that corresponds to the centroid it was assigned and perform the t-SNE on these affinely transformed data, (\textit{middle column}), the data transformed with respect to  non-rigid transformations as per Eq.~\ref{eq:def_invariant}, i.e., the same process as previously mention, but we consider the transformation induced by the TPS, and then perform the dimension reduction on these transformed data, (\textit{right column}). Each row corresponds to a different dataset, E-MNIST, Arabic Characters,  are depicted from the top to bottom row. For all the figures, the colors of the data represent their ground truth labels.}
\end{figure*}

\subsection{Additional Centroid Visualisations}
\label{app:sup_centroids}

\begin{figure*}[h!]
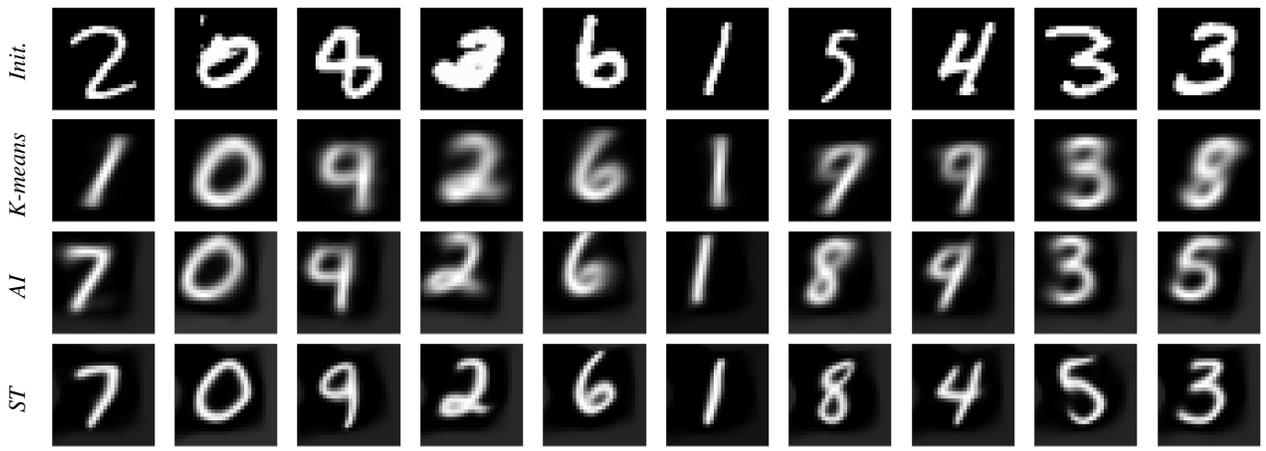

\begin{minipage}{.02\linewidth}
\rotatebox{90}{\small \textit{ST  \; \; \; \; \;\;\;\;\;AI \;\;\;\;\;\;\; K-means\;\;\;\;\;\;\;\;Init.}}
\end{minipage}
\foreach \c in {0,...,9}{
    \begin{minipage}{0.085\linewidth}
    \includegraphics[width=\linewidth]{images/best_mnist_centroid_init\c.png}\\
    \includegraphics[width=\linewidth]{images/kmean_mnist_centroid_final\c.png}\\
    \includegraphics[width=\linewidth]{images/aff_mnist_centroid_final\c.png}\\
    \includegraphics[width=\linewidth]{images/best_mnist_centroid_final\c.png}
    \end{minipage}
}
\caption{ \textit{Additional} - \textbf{MNIST Centroids Visualization (dim 28x28)} - Depiction of the initialization of the per-cluster centroids \textbf{top row} and the final per-cluster centroids of the K-means, Affine invariant K-means, and ST K-means (proposed) methods.}
\end{figure*}

\begin{figure*}[h!]
\begin{minipage}{.02\linewidth}
\rotatebox{90}{\small \textit{ST  \; \; \; \; \;\;\;\;\;AI \;\;\;\;\;\;\; K-means\;\;\;\;\;\;\;\;Init.}}
\end{minipage}
\foreach \c in {10,...,19}{
    \begin{minipage}{0.085\linewidth}
    \includegraphics[width=\linewidth]{images/best_emnist_centroid_init\c.png}\\
    \includegraphics[width=\linewidth]{images/kmean_emnist_centroid_final\c.png}\\
    \includegraphics[width=\linewidth]{images/aff_emnist_centroid_final\c.png}\\
    \includegraphics[width=\linewidth]{images/best_emnist_centroid_final\c.png}
    \end{minipage}
}
\caption{ \textit{Additional} - \textbf{$10$ out of $26$ E-MNIST Centroids Visualization (dim 28x28)} - Depiction of the initialization of the per-cluster centroids \textbf{top row} and the final per-cluster centroids of the K-means, AI K-means, and ST K-means (proposed) methods.}
\end{figure*}

\begin{figure*}[h!]
\begin{minipage}{.02\linewidth}
\rotatebox{90}{\small \textit{ST  \; \; \; \; \;\;\;\;\;AI \;\;\;\;\;\;\; K-means\;\;\;\;\;\;\;\;Init.}}
\end{minipage}
\foreach \c in {0,...,9}{
    \begin{minipage}{0.085\linewidth}
    \includegraphics[width=\linewidth]{images/best_audiomnist_centroid_init\c.png}\\
    \includegraphics[width=\linewidth]{images/kmean_audiomnist_centroid_final\c.png}\\
    \includegraphics[width=\linewidth]{images/aff_audiomnist_centroid_final\c.png}\\
    \includegraphics[width=\linewidth]{images/best_audiomnist_centroid_final\c.png}
    \end{minipage}
}
\caption{ \textit{Additional} - \textbf{Audio MNIST Centroids Visualization (dim 64x24)} - Depiction of the initialization of the per-cluster centroids \textbf{top row} and the final per-cluster centroids of the K-means, AI K-means, and ST K-means (proposed) methods.}
\end{figure*}

\begin{figure*}[h!]
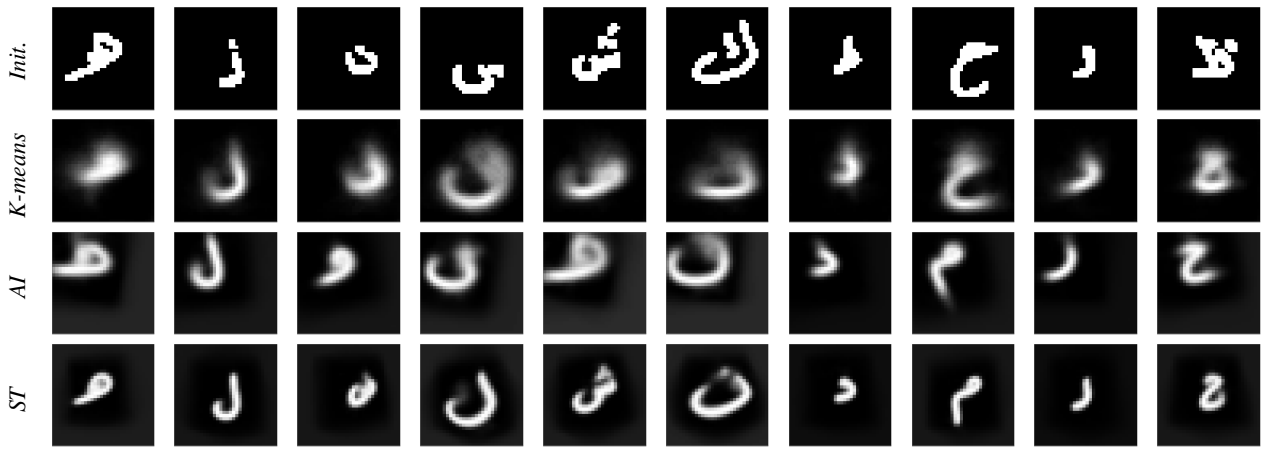

\begin{minipage}{.02\linewidth}
\rotatebox{90}{\small \textit{ST  \; \; \; \; \;\;\;\;\;AI \;\;\;\;\;\;\; K-means\;\;\;\;\;\;\;\;Init.}}
\end{minipage}
\foreach \c in {0,...,9}{
    \begin{minipage}{0.085\linewidth}
    \includegraphics[width=\linewidth]{images/best_arabchar_centroid_init\c.png}\\
    \includegraphics[width=\linewidth]{images/kmean_arabchar_centroid_final\c.png}\\
    \includegraphics[width=\linewidth]{images/aff_arabchar_centroid_final\c.png}\\
    \includegraphics[width=\linewidth]{images/best_arabchar_centroid_final\c.png}
    \end{minipage}
}
\caption{ \textit{Additional} - \textbf{$10$ out of $28$ Arab Characters Centroids Visualization (dim 32x32)} - Depiction of the initialization of the per-cluster centroids \textbf{top row} and the final per-cluster centroids of the K-means, AI K-means, and ST K-means (proposed) methods.}
\end{figure*}

%
%(\textit{\nth{2} Row}) Centroids STscovered by the K-means algorithm. (\textit{\nth{3} Row}) Centroids learned by optimizing only the affine transformation of leaSTng to invariance with respect to affine transformation only. (\textit{\nth{4} Row}) The final centroids learned through the Fr\'echet mean described in Proposition~\ref{prop2}. The update given our metric builds centroids as an average of transformed STgits. As opposed to Euclidean STstance based update, there is no overlapping of STfferent instances of the same cluster producing centroids composed of a mixture of STfferent images, which, depenSTng on the intra-class variance can lead to erroneous cluster assignment.}

\begin{figure*}[h!]
\begin{minipage}{.02\linewidth}
\rotatebox{90}{\small \textit{ST  \; \; \; \; \;\;\;\;\;AI \;\;\;\;\;\;\; K-means\;\;\;\;\;\;\;\;Init.}}
\end{minipage}
\foreach \c in {3,...,12}{
    \begin{minipage}{0.085\linewidth}
    \includegraphics[width=\linewidth]{images/best_facepos_centroid_init\c.png}\\
    \includegraphics[width=\linewidth]{images/kmean_facepos_centroid_final\c.png}\\
    \includegraphics[width=\linewidth]{images/aff_facepos_centroid_final\c.png}\\
    \includegraphics[width=\linewidth]{images/best_facepos_centroid_final\c.png}
    \end{minipage}
}
\caption{ \textit{Additional} - \textbf{$10$ out of $13$ Face Position Centroids Visualization (dim  288x384)} - Depiction of the initialization of the per-cluster centroids \textbf{top row} and the final per-cluster centroids of the K-means, AI K-means, and ST K-means (proposed) methods.}
\end{figure*}

\end{document}